\newtheorem{theorem}{Theorem}[section]
\newtheorem{corollary}{Corollary}[section]
\newtheorem{lemma}[theorem]{Lemma}
\title{Structured Monte Carlo Sampling for Nonisotropic Distributions via Determinantal Point Processes}
\author{
   Krzysztof Choromanski\textsuperscript{*} \\
  Google Brain Robotics \\
  kchoro@google.com \\
  \And
   Aldo Pacchiano\textsuperscript{*} \\
  UC Berkeley \\
 pacchiano@berkeley.edu \\
  \And
  Jack Parker-Holder\textsuperscript{*} \\
  Columbia University \\
 jh3764@columbia.edu \\
  \And
   Yunhao Tang\textsuperscript{*} \\
  Columbia University \\
  yt2541@columbia.edu \\
}
\begin{document}

\maketitle

\begin{abstract}
We propose a new class of structured methods for Monte Carlo (MC) sampling, called DPPMC, designed for high-dimensional nonisotropic distributions where samples are correlated to reduce the variance of the estimator via determinantal point processes. 
We successfully apply DPPMCs to problems involving nonisotropic distributions arising in guided evolution strategy (GES) methods for RL, CMA-ES techniques and trust region algorithms for blackbox optimization, improving state-of-the-art in all these settings. 
In particular, we show that DPPMCs drastically improve exploration profiles of the existing evolution strategy algorithms.
We further confirm our results, analyzing random feature map estimators for Gaussian mixture kernels. We provide theoretical justification of our empirical results, showing a connection between DPPMCs and structured orthogonal MC methods for isotropic distributions.
\end{abstract}

\section{Introduction}

Structured Monte Carlo (MC) sampling has recently received significant attention
\cite{choromanski_ort, choromanski_geo, choromanski_stockholm, unrea, montreal, kama, wasserstein} as a universal 
tool to improve MC methods for applications ranging from dimensionality 
reduction techniques and random feature map (RFM) kernel approximation \cite{unrea, kama} to evolution strategy methods for reinforcement learning (RL) \cite{montreal, wasserstein} and
estimating sliced Wasserstein distances between high-dimensional probabilistic distributions \cite{wasserstein}.
Structured MC methods rely on choosing samples from joint distributions where different samples are correlated in a particular way to reduce the variance of the estimator.
They are also related to the class of \textit{Quasi Monte Carlo} (QMC) methods that aim to improve concentration properties of MC estimators by using low discrepancy sequences of samples to reduce integration error \cite{quasi, kritzer}.

However, the key limitation of the above techniques is that they can only be applied to isotropic distributions, since they rely on samples' orthogonalization. 
For this class of methods the unbiasedness or asymptotic near-unbiasedness (for large enough dimensionality $d$) of the resulted orthogonal estimator follows directly
from the isotropicity of the corresponding multivariate distribution. 

We propose a new class of structured methods for MC sampling, called DPPMC, designed for high-dimensional non-isotropic distributions where samples are correlated to reduce the variance of the estimator via learned or non-adaptive determinantal point processes (DPPs) \cite{kulesza, gartrell}. DPPMCs are designed to work with highly non-isotropic distributions, yet they inherit
accuracy gains coming from structured estimators for the isotropic ones. As opposed to other sampling mechanisms using DPPs \cite{nystrom, nystrom2}, we propose a general hybrid DPP-MC architecture that can be applied in a wide range of scenarios from kernel estimation to RL.

We successfully applied DPPMCs to problems involving high-dimensional nonisotropic 
distributions naturally arising in guided evolution strategy (GES) methods for RL \cite{metz}, CMA-ES techniques and trust region methods for blackbox optimization, improving state-of-the-art in all of these settings. 
In particular, we show that DPPMCs drastically improve exploration profiles of the existing evolution strategy algorithms.
We further confirm our results analyzing RFM-estimators for Gaussian mixture kernels \cite{wilson, samo}, presenting detailed comparison with state-of-the-art density quantization methods. We use MC sampling as a preprocessing step from which a DPP downsamples to construct a final set of samples.
Furthermore, we provide theoretical justification of our empirical results, 
showing a connection between DPPMCs and structured orthogonal MC methods for isotropic distributions.

To motivate our approach, we mention the striking result from \cite{bardenet} showing that mixing quadratures with repulsive sampling provided by DPPs provably improves convergence rates of MC estimators. However, our algorithm is different - we do not rely on sampling from DPPs associated with multivariate orthogonal polynomials which requires cubic time. To the best of our knowledge, we are also the first to provide an extensive empirical evaluation showing that our approach is not only theoretically sound, but leads to efficient algorithms across a variety of settings. 



This paper is organized as follows: \textbf{(1)} In Section 2 we introduce Monte Carlo methods and Determinantal Point Processes, \textbf{(2)} In Section 3 we introduce our DPPMC algorithm, \textbf{(3)} In Section 4 we present theoretical guarantees for the class of DPPMC estimators, \textbf{(4)} In Section 5 we present all experimental results, in particular applications to a wide spectrum of reinforcement learning tasks.

\section{Towards DPPMCs: MC Methods and Determinantal Point Processes}

\subsection{Unstructured and Structured MC Sampling}
\label{sec:general}

Consider a function $F:\mathbb{R}^{d} \rightarrow \mathbb{R}^{m}$ defined as follows:
\begin{equation}
\label{base_eq}
F(\theta) = \mathbb{E}_{\mathbf{v} \sim \mathcal{D}}[h_{\theta}(\mathbf{v})], 
\end{equation}

where: $\mathcal{D} \in \mathcal{P}(\mathbb{R}^{d})$ is a $d$-dimensional (not necessarily isotropic) distribution and $h_{\theta}:\mathbb{R}^{d} \rightarrow \mathbb{R}^{m}$ is some function. Several important machine learning quantities can be expressed as in Equation \ref{base_eq}. For instance, many classes of kernel functions 
$K:\mathbb{R}^{d} \times \mathbb{R}^{d} \rightarrow \mathbb{R}$ admit
representation given by Equation \ref{base_eq}. 
The celebrated Bochner's theorem \cite{rahimi} states for every shift-invariant kernel $K:\mathbb{R}^{d} \times \mathbb{R}^{d} \rightarrow \mathbb{R}$:
\begin{equation}
\label{bochner_eq}
K(\mathbf{x},\mathbf{y}) = \int_{\mathbf{R}^{d}} p_{\mathcal{D}}(\omega)e^{i \omega^{T}(\mathbf{x}-\mathbf{y})}d\omega,
\end{equation}
for some distribution $\mathcal{D} \in \mathcal{P}(\mathbb{R}^{d})$ with density function $p_{\mathcal{D}}$
(sometimes called \textit{spectral density}) which is a Fourier Transform of $k:\mathbb{R}^{d} \rightarrow \mathbb{R}$ defined as $k(\tau)=K(\tau, 0)$. 
According to Equation \ref{bochner_eq}, values of the stationary kernel $K$ can be written as:
$
K(\mathbf{x},\mathbf{y}) = \mathbb{E}_{\mathbf{v} \sim \mathcal{D}} [\cos(\mathbf{v}^{\top}(\mathbf{x}-\mathbf{y}))],  
$
for some distribution $\mathcal{D} \in \mathcal{P}(\mathbb{R}^{d})$. If furthermore a stationary kernel $K$ is a radial basis function (RBF) kernel, i.e.
there exists $g:\mathbb{R} \rightarrow \mathbb{R}$ such that $K(\mathbf{x},\mathbf{y}) = g(\|\mathbf{x}-\mathbf{y}\|_{2})$, then the above 
distribution is isotropic. RBF kernels include in particular the classes of Gaussian, Mat\'{e}rn and Laplace kernels. Other prominent classes of kernels such as angular kernels or more general \textit{Pointwise Nonlinear Gaussian} kernels \cite{unrea} can be also expressed via Equation \ref{base_eq}.

Finally, in evolution strategies (ES), a blackbox optimization method frequently applied to learn policies for reinforcement 
learning and robotics \cite{ES, choromanski_stockholm, montreal, rbo},
gradients of Gaussian $\sigma$-smoothings of blackbox functions $f:\mathbb{R}^{d} \rightarrow \mathbb{R}$ (\textit{ES gradients}) are defined as:
\begin{equation}
\label{base_mc}
\nabla_{\sigma} f(\theta) = \mathbb{E}_{\mathbf{g} \sim \mathcal{N}(0, \mathbf{I}_{d})}[ \frac{1}{\sigma} f(\theta + \sigma \mathbf{g})\mathbf{g}].
\end{equation}

An unbiased baseline MC estimator of $F(\theta)$ from Equation \ref{base_eq} relies on independent sampling from distribution $\mathcal{D}$ and is of the form:
\begin{equation}
\widehat{F}^{\mathrm{iid}}_{m} = \frac{1}{m}\sum_{i=1}^{m} h_{\theta}(\mathbf{v}_{i}), 
\end{equation}
where $\mathbf{v}_{i} \overset{\mathrm{iid}}{\sim} \mathcal{D}$ and $m$ stands for the number of samples used. In the context of dot-product kernel approximation
that estimator leads to the so-called \textit{Johnson-Lindenstrauss Transforms} \cite{ailon2013, dasgupta2010sparse} 
and for nonlinear kernel approximation to the celebrated class of random feature map methods (see: \cite{rahimi}). 
In blackbox optimization domains it is a core part of many state-of-the-art ES methods \cite{ES, horia, rbo}.

In all the above applications distributions $\mathcal{D}$ from which samples were taken are isotropic. 
For such $\mathcal{D}$, we can further enforce different samples to be exactly orthogonal, while preserving their marginal distributions. This leads to the class of the so-called \textit{orthogonal estimators} $\widehat{F}^{\mathrm{ort}}_{m}$ \cite{choromanski_ort}, often characterized by lower variance than their unstructured counterparts \cite{choromanski_geo, unrea} followed by downstream gains (in ES optimization \cite{choromanski_stockholm}, Wassterstein GAN and autoencoder algorithms \cite{wasserstein} or even complicated hybrid predictive state recurrent neural network architectures as in \cite{choromanski_downey}).

\subsection{The Landscape of Nonisotropic Distributions}

Two fundamental limitations of the class of estimators $\widehat{F}^{\mathrm{ort}}_{m}$ is that they need the underlying
distributions to be isotropic for their (near)unbiasedness and they require the number of samples to satisfy $m \leq d$. 
Unfortunately, in practice the number of MC samples $m$ required even for a relatively modest task
of spherical Gaussian kernel approximation with precision $\epsilon$ with any constant probability is of the order $\Omega(\frac{d}{\epsilon^{2}}\log(\frac{d}{\epsilon}))$
(see: \cite{rahimi}). That problem can be addressed by stacking independent orthogonal blocks of samples.
However the former problem cannot be solved since the geometry of orthogonal structured transforms is intrinsically intertwined with the 
isotropicity of $\mathcal{D}$. 

Nonisotropic distributions arise in many important applications of machine learning. 
Several classes of non-RBF kernels are used as a more expressive tool to apply 
Gaussian processes (GPs) for learning hidden representation in data \cite{wilson}. 
The effectiveness of GPs depends on the quality of the interpolation mechanism applying given kernel function.
As noticed in \cite{remes}, RBF kernels lead to neighborhood-dominated interpolation that is unable of modelling different parts of the input space in several domains such as: geostatistics, bioinformations, 
signal processing.

A much more expressive family of non-monotonic (yet still stationary) kernels can be obtained by
modelling corresponding spectral density (leading straightforwardly to 
MC estimators) with the use of Gaussian mixture distributions $\mathcal{D}$ that are no longer isotropic. 

To be more specific, take the family of \textit{Gaussian mixture kernels} defined as:
\begin{equation}
\label{gkernel}
K(\mathbf{x},\mathbf{y}) = \sum_{q=1}^{Q} w^{q} \prod_{i=1}^{d} \exp(-2 \pi^{2} \tau_{i}^{2}v^{q}_{i})\cos(2\pi \tau_{i}\mu^{q}_{i}),  
\end{equation}

where: $\mathbf{x}, \mathbf{y} \in \mathbb{R}^{d}$, $\tau = \mathbf{x} - \mathbf{y}$, $Q$ is the number of Gaussian mixture components,
weights $w^{q}$ define their relative contributions, and finally
$\mu^{q}$ and $\mathrm{Cov}^{q} = \mathrm{diag}(v^{q}_{1},...,v^{q}_{d})$ stand for the mean and covariance matrix of the $q^{th}$ component.
The spectral distribution for that class of kernels $\mathcal{D} = 
\mathcal{N}(\{w^{1},\mu^{1},\mathrm{Cov}^{1}\},...,\{w^{Q},\mu^{Q},\mathrm{Cov}^{Q}\})$ is a 
mixture Gaussian distributions with relative weights $\{w^{1},...,w^{Q}\}$, means $\{\mu^{1},...,\mu^{Q}\}$ and covariance matrices 
$\{\mathrm{Cov}^{1},...,\mathrm{Cov}^{Q}\}$ of different mixture components.
Thus the values of these kernels can be expressed as:
$
K(\mathbf{x},\mathbf{y}) = \mathbb{E}_{\mathbf{v} \sim \mathcal{D}}\cos(\mathbf{v}^{\top}(\mathbf{x}-\mathbf{y}))
$
for the nonisotropic $\mathcal{D}$ defined above.

Since mixtures of Gaussians are dense in the set of distribution functions (in a weak topology sense), by applying Bochner's theorem, we can conclude that Gaussian mixture kernels
are dense in the space of all stationary kernels. The generalizations of Gaussian mixture kernels were also proved to be dense in the space of all non-stationary kernels \cite{samo}. 

Nonisotropic distributions also play a very important role in blackbox optimization, for instance in the CMA-ES algorithm \cite{akimoto, hansen} to create the populations
of samples of parameters to be evaluated in each epoch of the algorithm. Finally, learned nonisotropic distributions are applied on a regular basis in guided ES algorithms for 
policy optimization
\cite{metz} that estimate gradients of Gaussian smoothings $\nabla_{\sigma} f(\theta)$ of the RL function $f$ by sampling from nonisotropic distributions. 

\subsection{Determinantal Point Processes}
\label{sec:dpp}

Consider a finite set of datapoints $\mathcal{X} = \{\mathbf{x}^{1},...,\mathbf{x}^{N}\}$, where $\mathbf{x}^{i} \in \mathbb{R}^{d}$. A \textit{determinantal point process}
is a distribution $\mathcal{P}$ over the subsets of of $\mathcal{X}$ such that for some real, symmetric matrix $\mathbf{K}$ indexed by the elements of $\mathcal{X}$ the following holds for every
$A \subseteq \mathcal{X}$:
\begin{equation}\label{equation::marginal_kernel}
\mathbb{P}(A \subseteq \mathcal{S}) = \mathrm{det}(\mathbf{K}_{A}), 
\end{equation}

where $\mathcal{S}$ is sampled from $\mathcal{P}$ and $\mathbf{K}_{A}$ stands for the submatrix of $\mathbf{K}$ obtained by taking rows and columns indexed by the elements of $A$.
Note that $\mathbf{K}$ is positive semidefinite since all principal minors $\mathrm{det}(\mathbf{K}_{A})$ are nonnegative.
Determinantal point processes (DPPs) satisfy several so-called \textit{negative dependence property} conditions, such as:
$
\mathbb{P}[\mathbf{x}^{i} \in \mathcal{S} | \mathbf{x}^{j} \in \mathcal{S}] <  \mathbb{P}[\mathbf{x}^{i} \in \mathcal{S}]
$
for $i \neq j$, which can be directly derived from their algebraical definition. This makes them an interesting mechanism in applications where the goal is to subsample a 
diverse set of samples from a given set. 
To see it even more clearly, we can consider a restricted class of DPPs, the so-called $\textit{L-ensembles}$ \cite{borodin}, where the probability that a particular subset $S$ is chosen satisfies:
\begin{equation}\label{equation::l_ensemble}
\mathbb{P}[\mathcal{S}=S] = \frac{\mathrm{det}\mathbf{L}_{S}}{\mathrm{det}(\mathbf{L}+\mathbf{I}_{N})}
\end{equation}
for some matrix $\mathbf{L}$ that as before, has to be positive semidefinite.
If we interpret $\mathbf{L}$ as a kernel matrix $\mathbf{L}=[\langle \phi(\mathbf{x}^{i}),\phi(\mathbf{x}^{j}) \rangle]_{i,j=1,...,N}$, 
where $\phi$ is a corresponding feature map and $\langle \rangle$ stands for the dot-product form in the corresponding Hilbert space, then
we see that under the DPP sampling process the sets of near-orthogonal samples in the Hilbert space are favorable over nearly-collinear ones.
For instance, if $\phi : \mathbb{R}^{d} \rightarrow \mathbb{R}^{m}$ for some $m < \infty$ (as it is the case for example for random feature map representations from \cite{rahimi})
then probabilities $\mathbb{P}[\mathcal{S}=S]$ are proportional to squared volumes of the parallelepipeds defined by feature vectors $\phi(x^{s})$ for $s \in S$.
Thus samples that are similar according to a given kernel are less likely to appear together in the subsampled set than those that correspond to the orthogonal elements in the 
corresponding Hilbert space (see also Subsection \ref{sec:conn_ort}). 

The DPPs described above construct subsampled sets of different sizes, but if a fixed-size subset is needed a variant of the DPP called a k-DPP can be used 
(see: \cite{kdpp}).

\section{DPPMC Algorithm}\label{section::DPPMC_ALGORITHM}

We propose to estimate the expression from Equation \ref{base_eq} by the following procedure. We first choose the number of samples $m$ that we will average over (as in a standard baseline MC method). 
We then conduct oversampling by sampling independently at random $m \rho$ samples from $\mathcal{D}$ for some
fixed multiplier $\rho > 1$ (which is the hyperparameter of the algorithm) to obtain set $S_{\mathrm{MC}}$. Optionally, we renormalize datapoints of $S_{\mathrm{MC}}$ so that they are all of equal lengths.
We then downsample from the  $S_{\mathrm{MC}}$ using $m$-DPP and get an $m$-element set $\mathcal{S}_{\mathrm{DPP}}$. Finally, we estimate $F(\theta)$ as:
\begin{equation}\label{equation::downsampled_dpps}
\widehat{F}(\theta)^{\mathrm{DPPMC}} =  \frac{1}{m} \sum_{\mathbf{v} \in \mathcal{S}_{\mathrm{DPP}}} h_{\theta}(\mathbf{v}).
\end{equation}

In most practical applications it suffices to use a DPP determined by a fixed kernel function (see for instance: \cite{divnets}) and we show in Section \ref{blackbox} this approach is successful for RL tasks. However, for completeness we also present a learning framework. In order to learn the right kernel determining matrix $\mathbf{L}$ for the DPP (see: Subsection \ref{sec:dpp}), we model
this kernel as $K(\mathbf{x},\mathbf{y}) =\langle \phi(\mathbf{x}),\phi(\mathbf{y}) \rangle$, where function $\phi$ is the output of the feedforward
fully connected neural network. 

There is an extensive literature on learning DPPs via learned mappings $\phi$ produced by neural networks (see: \cite{gartrell}).
However, most approaches focus on a different setting, where
the goal is to learn the DPP from the subsets it produces (via negative maximal log-likelihood loss functions). Our neural network training is conducted as follows.

We approximate distribution $\mathcal{D}$ by the Gaussian mixture distribution $\mathcal{D}_{\mathrm{GM}}$. In most interesting practical applications the nonisotropic distributions under consideration are already Gaussian mixtures (thus no approximation is needed), but in principle the method can also be applied to other nonisotropic distributions.
Then we fix a training set of datapoints $\mathcal{X}_{\mathrm{train}} \subseteq \mathbb{R}^{d}$. In practice we use publicly available datasets (see: Subsection \ref{kernel_estimation}) with dimensionalities matching that of distribution $\mathcal{D}$. One can also consider synthetic datasets.
Next we train the neural network to minimize the empirical mean squared error ($\mathrm{MSE}$) of the DDPMC estimator of the Gaussian mixture kernel from Equation \ref{gkernel} corresponding to $\mathcal{D}_{\mathrm{GM}}$ on the pairs of points from the training set $\mathcal{X}_{\mathrm{train}}$ (this is just one of many loss functions that can be effectively used here). 

For given datapoints $\mathbf{x},\mathbf{y} \in \mathbb{R}^{d}$, the empirical $\mathrm{MSE}$ of the DPPMC approximator $\widehat{K}$ of the Gaussian mixture kernel $K$ is given as:
$
\label{empirical_mse}
\widehat{\mathrm{MSE}}(\widehat{K}(\mathbf{x},\mathbf{y})) = \frac{1}{t} \sum_{i=1}^{t} [(\frac{1}{m} \sum_{\mathbf{v} \in S_{\mathrm{DPP}}^{i}}h_{\tau}(\mathbf{v})-K(\mathbf{x},\mathbf{y}))^{2}],
$
where $\tau = \mathbf{x}-\mathbf{y}$, $h_{\theta}(\mathbf{v}) = \cos(\mathbf{v}^{\top}\theta)$ and sets $S^{i}_{\mathrm{DPP}}$ are constructed by $t$ independent runs of the above procedure,
where $t$ is a fixed hyperparameter determining accuracy of the estimation of $\mathrm{MSE}(\widehat{K}(\mathbf{x},\mathbf{y}))$.
The final loss function that we backpropagate through is the average empirical MSE over pairs of points from $\mathcal{X}_{\mathrm{train}}$.

The empirical mean squared error of kernels associated with nonisotropic 
distributions under consideration was chosen on purpose as an objective function minimized during training. For isotropic distributions the orthogonal structure (see: discussion about 
$\widehat{F}^{\mathrm{ort}}_{m}$ in Subsection \ref{sec:general}) that was first introduced as an effective tool for minimizing mean squared error of associated kernels (via random feature map 
mechanism) was later rediscovered as superior to baseline methods in other downstream tasks, as we discussed in Subsection \ref{sec:general}.






\section{Theoretical Results}

In this section we consider functions $F: \mathbb{R}^d \rightarrow \mathbb{R}^m$ from Equation \ref{base_eq}. All proofs of the presented results are given in the Appendix. 
We start by showing that DPPs can be used to provably reduce the MSE of downsampled estimators. 
Let $\{\mathbf{v}^1, \cdots, \mathbf{v}^N \} \subseteq \mathbb{R}^d$ be $N$ evaluation points of 
$F$\footnote{An important special case is when $\mathbf{v}^i \sim \mathcal{D}$ for all $i$ although it is 
not necessary for some of the results in this section to hold.}. Consider the case where each datapoint $\mathbf{v}^i$ 
is selected as part of the estimator with probability $p_i$. More formally, let $\{\epsilon_i\}_{i=1}^N$ be an ensemble of 
Bernoulli random variable with values in $\{0,1\}$ and marginal probabilities $\{p_i\}_{i=1}^N$.  Define the unbiased downsampled estimator as:
\begin{equation}
    \hat{F}(\theta)_U = \frac{1}{N}\sum_{i = 1}^N \frac{\epsilon_i}{p_i} h_\theta(\mathbf{v}^i).
\end{equation}
Notice that $\mathbb{E}_{\{\epsilon_i\}}\left[  \hat{F}(\theta)_U   \right] =\frac{1}{N} \sum_{i=1}^N h_\theta(\mathbf{v}^i)$. 
Let $\{ w_i \}$ be a set of importance weights with $w_i > 0$. 
We show that ensembles of Bernoulli random variables $\{ \epsilon_i \}$ sampled from a DPP can yield downsampling estimators with better variance than 
if these are produced i.i.d. with $\epsilon_i \sim \text{Ber}(p_i)$. Let $\mathbf{K}$ be a marginal kernel matrix defining a DPP with marginal probabilities $\mathbf{K}_{i,i} = p_i$ and such that the ensemble follows the DPP process. We consider the following subsampled ES estimator:
\begin{equation}
    \hat{F}(\theta)_U^{\mathrm{DPP}} = \frac{1}{N}\sum_{i=1}^N \frac{ \epsilon_i }{p_i } h_\theta(\mathbf{v}^i), 
 \end{equation}
where $\{\epsilon_i \} \sim \text{DPP}(\mathbf{K})$. Recall that here we have: $\mathbb{E}\left[ \epsilon_i  \right] = \mathbf{K}_{i,i}$ 
and $\mathbb{E}\left[ \epsilon_i \epsilon_j  \right] = \mathbf{K}_{i,i}\mathbf{K}_{j,j} - \mathbf{K}_{i,j}^2$ for $i \neq j$. 
We define $\hat{F}(\theta)_U^{\mathrm{iid}}$ in the analogous way, where this time samples $\{ \epsilon_i \}$ are i.i.d. Bernoulli with parameters $p_i$. 
In the theorem below we assume that $N \geq d+2$:
\begin{theorem}\label{theorem::unbiased_dpp_variance_reduction}
If $p_i  <1 $ for all $i$, there exists a Marginal Kernel $\mathbf{K} \in \mathbb{R}^{N\times N}$ such that:
\begin{align}
\begin{split}
    \mathbb{E}_{\{\epsilon_i\} \sim \mathrm{DPP}(\mathbf{K})}\left[  \hat{F}(\theta)_U^{\mathrm{DPP}}  \right] = \mathbb{E}_{\{ \epsilon_i \}  \sim \{Ber(p_i) \} }\left[  \hat{F}(\theta)_U^{\mathrm{iid}}  \right] 
    = \frac{1}{N} \sum_{i=1}^N h_\theta(\mathbf{v}^i)
\end{split}    
\end{align}
and furthermore:
$\mathrm{Var}( \hat{F}(\theta)^{\mathrm{DPP}}_U  ) <  \mathrm{
Var}( \hat{F}(\theta)^{\mathrm{iid}}_U)$. 
\end{theorem}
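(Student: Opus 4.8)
The plan is to treat the two estimators as weighted sums of the Bernoulli indicators $\epsilon_i$ and to isolate the only place where the DPP and the i.i.d.\ schemes differ, namely the pairwise correlations. First I would record that both estimators have the common mean $\frac{1}{N}\sum_i h_\theta(\mathbf{v}^i)$: this is immediate because $\mathbb{E}[\epsilon_i]=p_i$ in both models, so the factor $\epsilon_i/p_i$ is unbiased term by term and the correlation structure is irrelevant for the first moment. This disposes of the equality of expectations and reduces the whole statement to comparing second moments.

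Next I would compute the (scalar, i.e.\ trace-of-covariance) variance of each estimator. Writing $g_i = h_\theta(\mathbf{v}^i)/p_i \in \mathbb{R}^m$, one has $\mathrm{Var}(\hat{F}(\theta)_U)=\frac{1}{N^2}\sum_{i,j}\mathrm{Cov}(\epsilon_i,\epsilon_j)\langle g_i,g_j\rangle$. The diagonal terms $\mathrm{Cov}(\epsilon_i,\epsilon_i)=p_i(1-p_i)$ coincide in the two models, while the off-diagonal covariances are $0$ in the i.i.d.\ case and $\mathbb{E}[\epsilon_i\epsilon_j]-p_ip_j=-\mathbf{K}_{i,j}^2$ in the DPP case. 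Hence
\begin{equation}
\mathrm{Var}(\hat{F}(\theta)^{\mathrm{DPP}}_U) - \mathrm{Var}(\hat{F}(\theta)^{\mathrm{iid}}_U) = -\frac{1}{N^2}\sum_{i\neq j}\mathbf{K}_{i,j}^2\,\langle g_i,g_j\rangle,
\end{equation}
so it suffices to exhibit a valid marginal kernel $\mathbf{K}$ (symmetric, with spectrum in $[0,1]$ and prescribed diagonal $p_i$) for which $\sum_{i\neq j}\mathbf{K}_{i,j}^2\langle g_i,g_j\rangle>0$.

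For the construction I would start from $\mathbf{K}_0=\mathrm{diag}(p_1,\dots,p_N)$, which recovers exactly the i.i.d.\ estimator and zero gain, and switch on a single symmetric off-diagonal entry $\mathbf{K}_{i_0,j_0}=\mathbf{K}_{j_0,i_0}=\delta$ on a pair with $\langle g_{i_0},g_{j_0}\rangle>0$. Since $p_i<1$ for all $i$, for $\delta>0$ small enough the perturbed matrix stays positive semidefinite (the only nontrivial $2\times2$ block has determinant $p_{i_0}p_{j_0}-\delta^2\ge0$) and keeps all eigenvalues below $1$; thus it is a legitimate marginal kernel, and the displayed difference equals $-\frac{2\delta^2}{N^2}\langle g_{i_0},g_{j_0}\rangle<0$, giving the claimed strict reduction.

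The real content, and what I expect to be the main obstacle, is guaranteeing a pair with $\langle g_{i_0},g_{j_0}\rangle>0$. Here is where $N\ge d+2$ enters: in the regime of interest the $g_i$ lie in a space of dimension at most $d$ (e.g.\ the ES gradient case $m=d$ with $h_\theta(\mathbf{v})\propto\mathbf{v}$), so $N\ge d+2$ forces an affine dependence $\sum_i c_i g_i=0$ with $\sum_i c_i=0$ and $c\neq0$. Splitting the indices by the sign of $c_i$ into $P=\{c_i>0\}$ and $M=\{c_i<0\}$ (both nonempty precisely because the $c_i$ sum to zero) and setting $w=\sum_{i\in P}c_i g_i=\sum_{j\in M}(-c_j)g_j$, one gets $\|w\|^2=\sum_{i\in P,\,j\in M}(-c_ic_j)\langle g_i,g_j\rangle$ with strictly positive weights $-c_ic_j$. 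As long as $w\neq0$ this sum is positive, forcing at least one $\langle g_i,g_j\rangle>0$ and completing the proof. The delicate point is the degenerate case $w=0$ (where every affine dependence splits into two separately vanishing halves), which genuinely has no positive pair; I would dispatch it by invoking general position of the evaluation points---automatic almost surely in the important special case $\mathbf{v}^i\sim\mathcal{D}$ with $\mathcal{D}$ continuous---so that the affine dependence can be taken with $w\neq0$.
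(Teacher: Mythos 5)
Your proposal is correct and follows essentially the same route as the paper's proof: unbiasedness is immediate from $\mathbb{E}[\epsilon_i]=p_i$, the variance gap reduces to $-\tfrac{1}{N^2}\sum_{i\neq j}\mathbf{K}_{i,j}^2\langle g_i,g_j\rangle$, the kernel is built by perturbing $\mathrm{diag}(p_i)$ with small off-diagonal entries on pairs with positive inner product, and the existence of such a pair comes from the fact that $N\ge d+2$ vectors in $\mathbb{R}^d$ cannot be pairwise negatively correlated (the paper's Lemma on negative correlations, which you rederive via the affine-dependence splitting). The one place you go beyond the paper is in flagging the degenerate case where the guaranteed pair has inner product exactly zero rather than strictly positive; the paper silently assumes strict positivity, and your general-position/almost-sure patch is a legitimate fix for a gap the paper itself leaves open.
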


Thus DPP-based mechanism provides more accurate estimators.
As a consequence of the above theorem, 
we obtain guarantees for estimators of gradients of Gaussian smoothings. Let $f: \mathbb{R}^d \rightarrow \mathbb{R}$ and let 
$f_\sigma(\theta) = \mathbb{E}_{\mathbf{g} \sim \mathcal{N}(0, \mathbf{I}_d)}[ F(\theta + \sigma \mathbf{g})\mathbf{g}]$ be its Gaussian smoothing. 
Let $\nabla f_\sigma(\theta)$ 
denote the ES gradient of $f$, as defined in equation \ref{base_mc}, and call $\hat{\nabla}_U^{\mathrm{iid}} f_\sigma(\theta)$ and $\hat{\nabla}_U^{\mathrm{DPP}} f_\sigma(\theta)$ 
the corresponding unbiased downsampled iid and DPP versions of the estimator of $\nabla f_\sigma(\theta)$.



\begin{corollary}
\label{corollary::unbiased_ES_variance_reduction}
Let $\mathbf{g}^1, \cdots, \mathbf{g}^N \sim \mathcal{N}(0, \mathbf{I}_d)$ be $N \geq d+2$ iid normally distributed perturbations and let $\{p_i\}_{i=1}^N$ such that $p_i < 1$ for all $i$ be an ensemble of downsampling parameters. For any $\theta \in \mathbb{R}^d$ there is a marginal kernel $\mathbf{K} \in \mathbb{R}^{N\times N}$ such that:
$
    \mathbb{E}\left[ \hat{\nabla}_U^{\mathrm{DPP}} f_\sigma(\theta)   \right] = \mathbb{E}\left[ \hat{\nabla}^{\mathrm{iid}}_U f_\sigma(\theta)  \right] = \nabla f_\sigma(\theta), 
$
\begin{equation*}
    \underbrace{\mathbb{E}\left[ \hat{\nabla}_U^{\mathrm{DPP}} f_\sigma(\theta)   \right]}_{I} = \underbrace{\mathbb{E}\left[ \hat{\nabla}^{\mathrm{iid}}_U f_\sigma(\theta)  \right]}_{II} = \nabla f_\sigma(\theta), 
\end{equation*}
where: the first expectation is taken with respect to both $\{ \mathbf{v}^i \} \sim \mathcal{N}(0, \mathbf{I}_d)$ and $\{ \epsilon_i \} \sim \mathrm{DPP}(\mathbf{K})$
and the second expectation is taken with respect to both $\{ \mathbf{v}^i \} \sim \mathcal{N}(0, \mathbf{I}_d)$ and $\{ \epsilon_i\} \sim \{Ber(p_i) \}$. 
The variance satisfies:
$
\mathrm{Var}_(\hat{\nabla}^{\mathrm{DPP}}_{U} f_\sigma(\theta))  <\mathrm{
Var}( \hat{\nabla}^{\mathrm{iid}}_{U} f_\sigma(\theta)   ),
$
\begin{equation*}
    \underbrace{\mathrm{Var}_(\hat{\nabla}^{\mathrm{DPP}}_{U} f_\sigma(\theta)) }_{I} <\underbrace{\mathrm{
    Var}( \hat{\nabla}^{\mathrm{iid}}_{U} f_\sigma(\theta)   )}_{II},
\end{equation*}
where the variance on the LHS of the inequality is computed with respect to 
$\{\epsilon_i \} \sim \mathrm{DPP}(\mathbf{K})$ and the variance on the RHS is computed with respect to $\{\epsilon_i\} \sim \{Ber(p_i)\}$.
\end{corollary}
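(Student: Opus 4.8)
The plan is to reduce the claim to a single scalar inequality about the off-diagonal entries of $\mathbf{K}$, and then to construct $\mathbf{K}$ explicitly. Unbiasedness is immediate: for both ensembles $\mathbb{E}[\epsilon_i]=p_i$, so $\mathbb{E}[\hat{F}(\theta)_U]=\frac{1}{N}\sum_i h_\theta(\mathbf{v}^i)$ regardless of the joint law of $\{\epsilon_i\}$. Since the two estimators share this mean, the difference of their variances (interpreted as $\mathrm{tr}\,\mathrm{Cov}$, i.e.\ $\mathbb{E}\|\hat{F}-\mathbb{E}\hat{F}\|^2$) equals the difference of their second moments. Writing $g_i:=h_\theta(\mathbf{v}^i)$ and using $\mathbb{E}[\epsilon_i^2]=p_i$ (so diagonal terms coincide), together with $\mathbb{E}^{\mathrm{DPP}}[\epsilon_i\epsilon_j]=p_ip_j-\mathbf{K}_{i,j}^2$ versus $\mathbb{E}^{\mathrm{iid}}[\epsilon_i\epsilon_j]=p_ip_j$, I would obtain
\[
\mathrm{Var}(\hat{F}(\theta)_U^{\mathrm{DPP}})-\mathrm{Var}(\hat{F}(\theta)_U^{\mathrm{iid}})=-\frac{1}{N^2}\sum_{i\neq j}\frac{\mathbf{K}_{i,j}^2}{p_ip_j}\langle g_i,g_j\rangle.
\]
Hence it suffices to exhibit a valid marginal kernel $\mathbf{K}$ for which $\sum_{i\neq j}\frac{\mathbf{K}_{i,j}^2}{p_ip_j}\langle g_i,g_j\rangle>0$.

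Because the coefficients $\mathbf{K}_{i,j}^2$ are nonnegative, I can make this sum strictly positive as soon as a single pair $(i_0,j_0)$ with $\langle g_{i_0},g_{j_0}\rangle>0$ exists: take $\mathbf{K}=\mathrm{diag}(p_1,\dots,p_N)+c\,(\mathbf{e}_{i_0}\mathbf{e}_{j_0}^\top+\mathbf{e}_{j_0}\mathbf{e}_{i_0}^\top)$ with $c>0$ and all other off-diagonal entries zero. I then need to check $\mathbf{K}$ is a genuine marginal kernel, i.e.\ $\mathbf{0}\preceq\mathbf{K}\preceq\mathbf{I}_N$ with the prescribed diagonal. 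The diagonal is $p_i$ by construction, and $\mathbf{K}$ differs from a diagonal matrix only in the $2\times 2$ principal block indexed by $\{i_0,j_0\}$, whose eigenvalues are $\frac{p_{i_0}+p_{j_0}}{2}\pm\sqrt{(\frac{p_{i_0}-p_{j_0}}{2})^2+c^2}$; since $p_{i_0},p_{j_0}<1$, choosing $c>0$ with $c^2\le p_{i_0}p_{j_0}$ small keeps these in $(0,1)$, so $\mathbf{K}$ is positive semidefinite with spectrum in $[0,1]$ and defines a DPP. This yields the strict variance reduction.

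The crux is therefore the existence of a strictly acute pair, and this is where $N\ge d+2$ enters. Here I would invoke the classical fact that at most $r+1$ vectors in an $r$-dimensional Euclidean space can have pairwise strictly negative inner products (proved by induction on $r$: project onto the orthogonal complement of one vector and check the projections remain pairwise obtuse). The vectors $\{g_i\}$ lie in a subspace of dimension $r\le d$ --- in the ES setting of Corollary~\ref{corollary::unbiased_ES_variance_reduction} they are scalar multiples of $\mathbf{v}^i\in\mathbb{R}^d$, so $r\le d$ automatically --- and $N\ge d+2>d+1\ge r+1$, so the $g_i$ cannot all be pairwise obtuse; some pair satisfies $\langle g_{i_0},g_{j_0}\rangle\ge 0$.

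Upgrading this $\ge 0$ to the strict $>0$ needed above is the main obstacle. A family with all pairwise inner products $\le 0$ can have up to $2d$ members (e.g.\ $\pm\mathbf{e}_1,\dots,\pm\mathbf{e}_d$), so $N\ge d+2$ alone does not preclude orthogonal-but-nonpositive configurations. I would close this gap through nondegeneracy: when the $\mathbf{v}^i$ are drawn from the continuous distribution $\mathcal{D}$ (the setting of the footnote), every $\langle g_i,g_j\rangle$ is almost surely nonzero, so ``no strictly positive pair'' would force ``all pairs strictly negative,'' which the counting bound rules out. Thus almost surely a strictly acute pair exists and the construction applies; the only remaining care is to record this nondegeneracy hypothesis and to confirm the span-dimension bound $r\le d$ in the general (non-ES) case.
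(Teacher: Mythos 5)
Your proposal follows essentially the same route as the paper's: reduce the variance gap to the sign of $\sum_{i\neq j}\tfrac{\mathbf{K}_{i,j}^2}{p_ip_j}\langle g_i,g_j\rangle$, invoke the bound that at most $d+1$ vectors in $\mathbb{R}^d$ can have pairwise strictly negative inner products (the paper's Lemma~\ref{lemma::vectors_negative_correlations}), and realize the kernel by placing a small $\epsilon>0$ off the diagonal while keeping $0\prec\mathbf{K}\prec\mathbf{I}_N$ (your single-pair perturbation makes the positive-semidefiniteness check cleaner than the paper's all-positive-pairs version, but is otherwise the same idea). The one substantive difference is in your favor: you correctly flag that the counting lemma only yields a pair with $\langle g_{i_0},g_{j_0}\rangle\ge 0$ rather than $>0$ --- a step the paper's appendix glosses over --- and you patch it with an almost-sure nondegeneracy argument for Gaussian $\mathbf{v}^i$; just record explicitly that since $g_i$ carries the factor $f(\theta+\sigma\mathbf{v}^i)$, one also needs $f(\theta+\sigma\mathbf{v}^i)\neq 0$ almost surely for that argument to close.
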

This implies that provided we select an appropriate DPP-Kernel matrix $\mathbf{K}$, 
DPPMC yields an unbiased estimator of the gradient of the Gaussian smoothing $\nabla f_\sigma(\theta)$ of smaller variance than 
iid estimator. The proof of this theorem can be turned into a procedure to produce such a Kernel $\mathbf{K}$. When the probabilities $p_i=p$ for all $i$, the importance weighted estimator is equivalent (with high probability) to the downsampled estimators we use in Section \ref{section::experiments} that already outperform other methods.

\subsection{Connections with Orthogonality}
\label{sec:conn_ort}

In this section we formalize the intuition that the most likely sets sampled under a Determinantal Point Process correspond to subsets of the dataset with orthogonal features
in the kernel space. 
In \cite{choromanski_stockholm} the authors 
study the benefits of coupling sensing directions used to build ES estimators 
by enforcing orthogonality between the sampling directions while preserving Gaussian marginals. 
It can be shown this strategy provably reduces the variance of the resulting gradient estimators. 
We shed light on this phenomenon through the perspective of DPPs. 
In what follows assume $\mathcal{X} = \{ \mathbf{x}^1, \cdots, \mathbf{x}^N\}$ with $\mathbf{x}^i \in \mathbb{R}^d$ and 
let $\phi : \mathbb{R}^d  \rightarrow \mathbb{R}^D$ be a possibly infinite feature 
map $\phi$ defining a kernel.  

\begin{theorem}\label{theorem::orthogonal_connections}
Let $\mathbf{L}=[\langle \phi(\mathbf{x}^i), \phi(\mathbf{x}^j) \rangle]_{i,j} \in \mathbb{R}^{N \times N} $ be an $\mathbf{L}-$ ensemble, where $\| \Phi(\mathbf{x}^i)\|_{2} = 1$ 
for all $i \in [N]$. Let $k \in \mathbb{N}$ with $k \leq N$ and assume there exist $k$ samples $\mathbf{x}^{i_1}, \cdots, \mathbf{x}^{i_k}$ in $\mathcal{X}$ 
satisfying $\langle \phi(\mathbf{x}^{i_j}) , \phi(\mathbf{x}^{i_l})\rangle = 0$ for all $j, l \in [k]$. If $\mathbb{P}_k$ denotes the $\mathrm{DPP}$ measure over subsets 
of size $k$ of $[N]$ defined by $\mathbf{L}$, the most likely outcomes from $\mathbb{P}_k$ are the size-$k$ pairwise orthogonal subsets of $\mathcal{X}$.
\end{theorem}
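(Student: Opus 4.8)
The plan is to combine the defining property of the $k$-DPP with Hadamard's inequality for positive semidefinite matrices. By definition of the $k$-DPP induced by the $\mathbf{L}$-ensemble, the probability assigned to a size-$k$ subset $S \subseteq [N]$ is
\begin{equation}
\mathbb{P}_k[\mathcal{S} = S] = \frac{\det(\mathbf{L}_S)}{\sum_{|S'| = k} \det(\mathbf{L}_{S'})}.
\end{equation}
Since the normalizing denominator is independent of $S$, identifying the most likely outcomes of $\mathbb{P}_k$ is equivalent to maximizing $\det(\mathbf{L}_S)$ over all size-$k$ subsets. Thus the statement reduces to a deterministic optimization problem over the principal $k \times k$ submatrices of $\mathbf{L}$.

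Next I would exploit that each such submatrix is a Gram matrix. Indeed $\mathbf{L}_S = [\langle \phi(\mathbf{x}^i), \phi(\mathbf{x}^j)\rangle]_{i,j \in S}$ is positive semidefinite and, because the features are normalized, has unit diagonal: $(\mathbf{L}_S)_{ii} = \|\phi(\mathbf{x}^i)\|_2^2 = 1$ for every $i \in S$. Hadamard's inequality for positive semidefinite matrices then gives $\det(\mathbf{L}_S) \le \prod_{i \in S} (\mathbf{L}_S)_{ii} = 1$, with equality if and only if $\mathbf{L}_S$ is diagonal. Since the off-diagonal entries of $\mathbf{L}_S$ are exactly the inner products $\langle \phi(\mathbf{x}^i), \phi(\mathbf{x}^j)\rangle$, the matrix is diagonal precisely when the chosen features are pairwise orthogonal. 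Equivalently, one may argue via the volume interpretation already noted after Equation~\ref{equation::l_ensemble}: $\det(\mathbf{L}_S)$ is the squared $k$-dimensional volume of the parallelepiped spanned by the unit vectors $\{\phi(\mathbf{x}^i)\}_{i \in S}$, which is at most $1$ and attains $1$ exactly for an orthonormal system.

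Finally I would invoke the hypothesis that there exist $k$ pairwise orthogonal samples $\mathbf{x}^{i_1}, \dots, \mathbf{x}^{i_k}$. This guarantees the upper bound $\det(\mathbf{L}_S) = 1$ is actually achieved by at least one feasible subset, so the maximum of $\det(\mathbf{L}_S)$ over size-$k$ subsets equals $1$. Combining this with the equality characterization, the maximizers, and hence the most likely outcomes under $\mathbb{P}_k$, are exactly the size-$k$ subsets whose features are pairwise orthogonal, which is the claim.

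I expect the only nontrivial point to be the equality case of Hadamard's inequality; the cleanest justification is the Gram/volume argument (the squared volume equals $\det(\mathbf{L}_S)$, and a parallelepiped built from unit edge vectors saturates its product-of-lengths bound if and only if the edges are mutually orthogonal), or alternatively an eigenvalue argument comparing the product of eigenvalues of $\mathbf{L}_S$ with its trace via AM--GM. Everything else is bookkeeping with the $k$-DPP normalization.
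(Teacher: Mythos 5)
Your proof is correct and follows the same overall skeleton as the paper's: reduce the question to maximizing $\det(\mathbf{L}_S)$ over size-$k$ principal submatrices, bound that determinant by $1$ using the unit-diagonal structure, characterize the equality case as pairwise orthogonality, and use the existence hypothesis to confirm the bound is attained. The only real difference is the tool used for the bound and its equality case. The paper applies the AM--GM inequality to the eigenvalues of $\mathbf{L}_S$ (giving $\det(\mathbf{L}_S)^{1/k} \le \mathrm{tr}(\mathbf{L}_S)/k = 1$, with equality iff all eigenvalues equal $1$) and then needs an auxiliary lemma identifying the nonzero eigenvalues of $\mathbf{L}_S$ with those of $\Sigma = \sum_{i\in S}\phi(\mathbf{x}^i)\phi^{\top}(\mathbf{x}^i)$, plus an operator-norm computation of $\phi^{\top}(\mathbf{x}^i)\Sigma\,\phi(\mathbf{x}^i)$, to convert ``all eigenvalues equal $1$'' into ``pairwise orthogonal.'' You instead invoke Hadamard's inequality for positive semidefinite matrices, whose equality case directly yields that $\mathbf{L}_S$ is diagonal, i.e.\ the features are orthogonal; this shortcuts the paper's eigenvalue-transfer lemma entirely. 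One small point worth making explicit: the equality case of Hadamard you cite holds for positive \emph{definite} matrices, so you should note that $\det(\mathbf{L}_S) = 1 > 0$ forces $\mathbf{L}_S$ to be nonsingular before concluding diagonality (the degenerate PSD equality case, where a diagonal entry vanishes, is ruled out because the diagonal is identically $1$). With that one-line remark added, your argument is complete and arguably cleaner than the paper's.
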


\section{Experiments}\label{section::experiments}

We aim to address here the following questions: 
\textbf{(1)} Do DPPMCs help to achieve better concentration results for MC estimation? 
\textbf{(2)} Do DPPMCs provide benefits for downstream tasks?
To address \textbf{(1)}, we consider estimating kernels using random features. 
To address \textbf{(2)}, we analyze applications of DPPMCs for high-dimensional blackbox optimization. 
We present extended ablation studies regarding parameter $\rho$ in the Appendix (see: Section \ref{ro}).

\paragraph{Complexity:} We emphasize the conceptual simplicity of our algorithm. Improving state-of-the-art in the RL setting, where we fix an RBF kernel defining the DPP (i.e. learning is not needed) requires adding few lines of code (we include a generic 11-line example of standard DPP python implementation in Section \ref{code}). Learning a DPP follows the standard supervised framework. Sampling from DPPs requires a priori the eigen-decomposition of  matrix $\mathbf{L}$, however we use fast sub-cubic (k)-DPP sampling mechanisms \cite{kang,li}. For blackbox optimization, time complexity of DPP sampling was negligible in comparison with that for function querying. Thus wall-clock time is accurately measured by the number of timesteps/function evaluations and we show that DPPMC enhancements need substantially fewer of them. For kernel approximation, time complexity of estimating kernel values is exactly the same for the DPPMC and baseline estimator (and reduces to that of matrix-vector multiplication). DPPMC requires DPP sampling, but in that setting it is a one-time cost.

\subsection{Kernel Estimation}
\label{kernel_estimation}

We compare the accuracy of the baseline MC estimator of values of Gaussian mixture kernels from Equation \ref{gkernel} using independent samples ($\mathrm{IID}$) with those applying Quasi Monte Carlo methods ($\mathrm{QMC}$) \cite{avron2016quasi}, estimators based on state-of-the-art quantization methods: $\mathrm{DPQ}$ \cite{luxburg}, $\mathrm{DSC}$ \cite{krause} and our DPPMC mechanism. We applied different QMC estimators and on each plot show the best one. We compare empirical mean squared errors of the above methods. The results are presented on $\mathrm{cpu}$ dataset. DPP mechanism was trained on $\mathrm{wine}$
dataset. Mapping $\phi$ was encoded by standard feedforward fully connected neural network architectures with two hidden layers of size $h=40$ each and with $\mathrm{tanh}$ nonlinearities. We analyzed Gaussian mixture kernels with different number of components $Q$. Fig. \ref{fig:mse} shows that
in all settings, DPPMC substantially outperforms
all other methods. We did not include orthogonal sampling method, since it did not work for the considered kernels.

\begin{figure}
\begin{minipage}{0.99\textwidth}
	\subfigure[$Q=2$]{\includegraphics[keepaspectratio, width=0.23\textwidth]{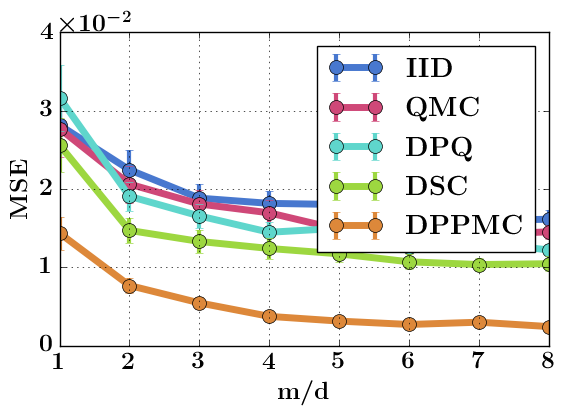}}
	\subfigure[$Q=3$]{\includegraphics[keepaspectratio, width=0.23\textwidth]{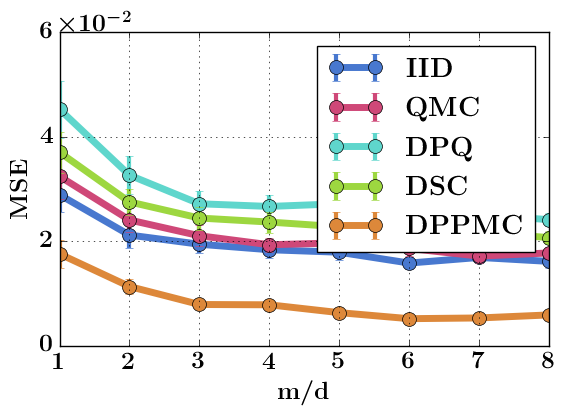}}  
	\subfigure[$Q=4$]{\includegraphics[keepaspectratio, width=0.23\textwidth]{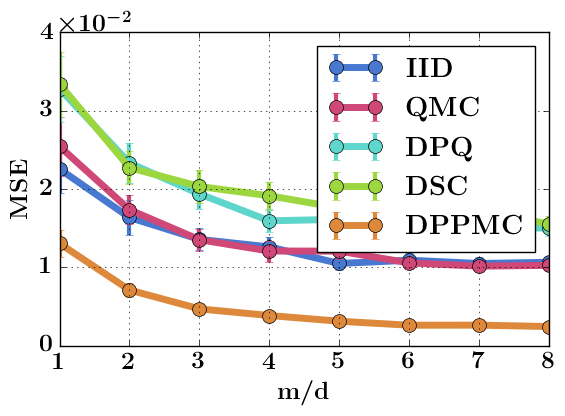}}
	\subfigure[$Q=5$]{\includegraphics[keepaspectratio, width=0.23\textwidth]{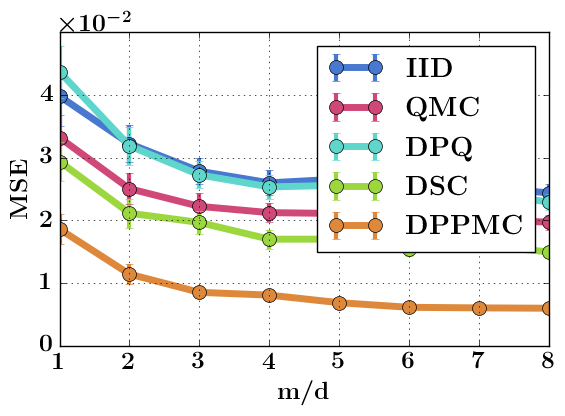}}  
\end{minipage}
	\caption{\small{Comparison of different estimators of Gaussian mixture kernels for different number of components: $Q$ on $\mathrm{cpu}$ dataset. On the horizontal axis: the ratio of the number of samples used and dimensionality of the
	datapoints. On the vertical axis: obtained empirical mean squared error.}}
	\label{fig:mse} 
\end{figure}

\subsection{Blackbox Optimization}
\label{blackbox}

ES blackbox optimization algorithms rely on sampling perturbation directions for function evaluations 
to optimize sets of parameters \citep{ES,choromanski_stockholm}. 
We propose to improve these baseline algorithms by augmenting their sampling subroutines with 
DPPMCs. We consider the following baseline methods:
\textbf{(1)} recently proposed guided ES methods, such as Guided Evolution Strategies \citep{metz}, \textbf{(2)} Trust-Region based ES methods resusing certain samples for better time complexity
\cite{rbo}, \textbf{(3)} Covariance Matrix Adaptation Evolution Strategy $\mathrm{CMA}$-$\mathrm{ES}$, a state-of-the-art blackbox optimization algorithm \cite{CMAES}. 

In each setting, the key difference between the baseline algorithm and our proposed method is that the former carries out uniform sampling 
from a given distribution $\mathcal{D}$, while our method diversifies the set of samples using DPPMC. Using a diverse set of samples leads to more efficient 
exploration in the parameter space and benefits downstream training, as we show later. We used a fixed Gaussian kernel with tuned variance to determine DPP.
We consider two sets of benchmark problems.


\paragraph{Reinforcement Learning:} In reinforcement learning (RL), at each time step $t$ an agent observes state $s_t \in \mathcal{S}$, takes action $a_t$, 
receives reward $r_t \in \mathbb{R}$ and transitions to the next state $s_{t+1} \in \mathcal{S}$. 
A policy is a mapping 
$\pi_\theta: \mathcal{S} \rightarrow \mathcal{A}$ from states to actions that will be conducted in that states and
is parameterized by vector $\theta$. The goal is to optimize that mapping to maximize expected cumulative reward $\mathbb{E}[\sum_{t=0}^T r_t]$ over given time horizon $T$. When framing RL as a blackbox optimization problem, the input $\theta$ to the blackbox function $f$ is usually a vectorized neural network 
and the output is a noisy estimate of the cumulative reward, 
obtained by executing policy $\pi_\theta$ in a particular environment. 
We consider environments: 
$\mathrm{Swimmer}$-$\mathrm{v2}$, $\mathrm{HalfCheetah}$-$\mathrm{v2}$, $\mathrm{Walker2d}$-$\mathrm{v2}$ and $\mathrm{Reacher}$
from the $\mathrm{OpenAI}$ $\mathrm{Gym}$ library and trained policies encoded by fully connected feedforward neural networks. 

\paragraph{Nevergrad Functions:} 
Blackbox functions from the recently open-sourced $\mathrm{Nevergrad}$ library \citep{nevergrad}, 
using the well-known open-source implementation of $\mathrm{CMA}$-$\mathrm{ES}$ (from $\mathrm{https://github.com/CMA}$-$\mathrm{ES/pycma}$). 
We tested functions: $\mathrm{Cigar}$, $\mathrm{Sphere}$, $\mathrm{Rosenbrock}$ and $\mathrm{Rastragin}$.

We are ready to describe the considered ES algorithms.
\begin{figure}[H]
\vspace{-3mm} 
\begin{minipage}{0.99\textwidth}
	\centering \subfigure{\includegraphics[keepaspectratio, width=0.26\textwidth]{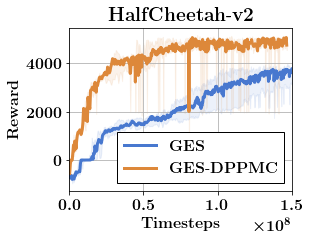}}
	\centering \subfigure{\includegraphics[keepaspectratio, width=0.23\textwidth]{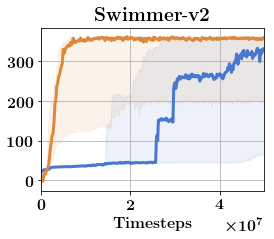}}
	\centering \subfigure{\includegraphics[keepaspectratio, width=0.26\textwidth]{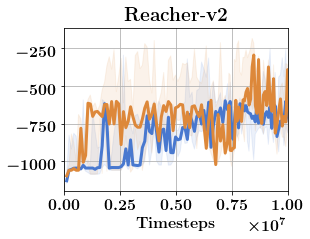}}
	\centering \subfigure{\includegraphics[keepaspectratio, width=0.23\textwidth]{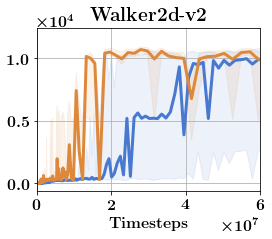}}  
\end{minipage}
	\caption{\small{Standard Guided ES versus their DPPMC enhancements on $\mathrm{OpenAI}$ $\mathrm{Gym}$ tasks.
	 Presented are median-curves from $k=10$ seeds and with inter-quartile ranges as shadowed regions.}}
	\label{fig:guided_exps} 
\vspace{-3mm} 	
\end{figure}

\paragraph{Guided ES:}
In each iteration, Guided ES methods sample $m$ perturbation vectors from the non-isotropic Gaussian distribution $\mathcal{D}$ with an adaptive covariance matrix computed from 
the empirical covariance matrix of gradients obtained via a biased oracle \citep{metz} or previous estimation, 
as it is the case in recently proposed approaches based on ES-active subspaces. Such an adaptive non-isotropic sensing 
often leads to more sample-efficient estimation of the gradient by exploring subspaces where the true gradients are most likely to be. 
In the DPPMC enhancement of those techniques, we first sample $l=\rho m$ vectors from $\mathcal{D}$ for $\rho=10$, and down-sample to get a subset of $m$ vectors via DPPs. 

In Fig.\ref{fig:guided_exps}, we compare baseline Guided ES with its enhanced DPPMC version. 
The vertical axis shows the expected cumulative reward during training and the horizontal 
axis - the number of time steps. Each plot shows the average performance with shaded area 
indicating inter-quartiles across $r=10$ random seeds. DPPMC leads 
to substantially better training curves. To achieve reward $\approx 2000$ in HalfCheetah-v2, baseline algorithm requires $\approx 10^8$ steps while DPPMC only $10^{7}$.



\paragraph{Trust Region ES:}

Trust Region ES methods, as those recently proposed in \cite{rbo}, rely on reusing $\delta m$ perturbations from previous epochs for some $0 < \delta < 1$
and applying regression techniques to estimate gradients of blackbox functions.
Those methods do not require perturbations to be independent.
DPPMCs can be applied in this setting by sampling $(1-\frac{\delta}{2})m$ new perturbations (instead of $(1-\delta)m$)
and then downsampling from the set of all $(1+\frac{\delta}{2})m$ perurbations ($(1-\frac{\delta}{2})m$ new and $\delta m$ reused) only $m$ of them. By doing it, we do not reuse all $\delta m$ samples, but obtain much more diverse set of perturbations that ultimately improves sampling complexity. We take $\delta=0.2$.

\begin{figure}[H]
\vspace{-3mm} 
\begin{minipage}{1\textwidth}
	\centering \subfigure{\includegraphics[keepaspectratio, width=0.268\textwidth]{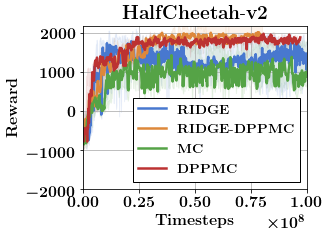}}
	\centering \subfigure{\includegraphics[keepaspectratio, width=0.225\textwidth]{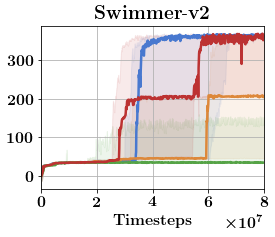}}  
	\centering \subfigure{\includegraphics[keepaspectratio, width=0.25\textwidth]{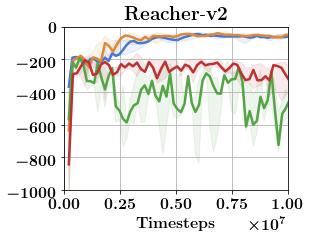}}
	\centering \subfigure{\includegraphics[keepaspectratio, width=0.237\textwidth]{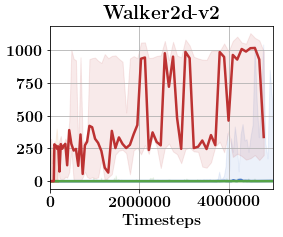}}  
\end{minipage}
	\caption{\small{$\mathrm{RBO}$ trust region method using MC/ridge gradients versus its DPPMC enhancements on $\mathrm{OpenAI}$ $\mathrm{Gym}$ tasks.
	 All curves are median-curves from $k=5$ seeds and with inter-quartile ranges as shadowed regions.}}
	\label{fig:tr_exps} 
\vspace{-3mm} 	
\end{figure}

As we can see in Fig.\ref{fig:tr_exps}, for most of the cases 
DPPMC-based  Trust Region ES method outperforms algorithm $\mathrm{RBO}$ from \cite{rbo}
that uses standard Trust Region ES mechanism and was already showed to outperform vanilla ES baselines. In particular, for $\mathrm{Walker2d}$-$\mathrm{v2}$ the only method that manages to learn in a given timeframe is based
on DPPMC sampling. 

\begin{figure}[H]
\vspace{-3mm} 
\begin{minipage}{0.99\textwidth}
	\centering \subfigure{\includegraphics[keepaspectratio, width=0.245\textwidth]{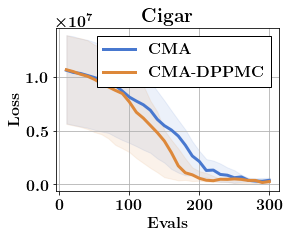}}
	\centering \subfigure{\includegraphics[keepaspectratio, width=0.225\textwidth]{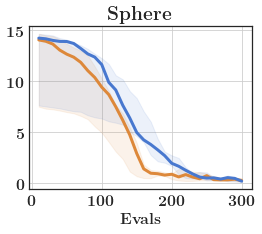}}  
	\centering \subfigure{\includegraphics[keepaspectratio, width=0.24\textwidth]{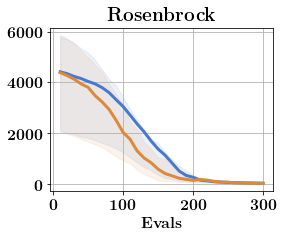}}
	\centering \subfigure{\includegraphics[keepaspectratio, width=0.235\textwidth]{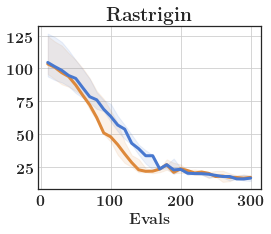}}  
\end{minipage}
	\caption{\small{$\mathrm{CMA}$-$\mathrm{ES}$ (baseline) versus its DPPMC version for $\mathrm{Nevergrad}$ functions. 
	Presented are median-curves from $k=5$ seeds and with inter-quartile ranges as shadowed regions.}}
	\label{fig:cmaes_exp} 
\vspace{-3mm} 	
\end{figure}

\paragraph{CMA-ES:}
In each iteration, $\mathrm{CMA}$-$\mathrm{ES}$ samples a set of $m$ perturbation vectors from a non-isotropic Gaussian distribution for function evaluations. 
Unlike for the above Guided ES methods, the covariance matrix is adapted by running weighted regression over sampled perturbations, where the weights are the 
function evaluations for different perturbations. Such an adaptive mechanism allows also for efficient exploration in the parameter space, and has performed robustly 
even for high-dimensional tasks \citep{CMAES,rllab}. To construct the candidate pool for $\mathrm{CMA}$-$\mathrm{ES}$, 
we first sample $l=\rho m$ non-isotropic Gaussian vectors for $\rho=10$, and then downsample $m$ elements via DPPs. 


We compare $\mathrm{CMA}$-$\mathrm{ES}$ baseline with its DPPMC enhancement 
in Fig. \ref{fig:cmaes_exp}. The horizontal axis shows the cumulative number of function evaluations we make as 
the optimization progresses, while the vertical axis shows the expected loss. Each plot shows the average performance with shaded area indicating 
inter-quartiles across $5$ random seeds. DPPMC achieves consistent gains across 
all presented $\mathrm{Nevergrad}$ benchmarks. 
We remark that since the  open source implementation of $\mathrm{pycma}$ is highly optimized, 
obtaining even marginal improvements across multiple benchmarks is not  trivial.


\vspace{-3mm} 
\section{Conclusion}

We presented new sampling mechanism DPPMC based on determinantal point processes to improve standard
MC methods for nonisotropic distributions. We furthermore showed the effectiveness of our approach on several downstream tasks
(guided ES search, CMA-ES and trust-region methods for blackbox optimization) and provided theoretical guarantees.

\bibliographystyle{abbrv}
\bibliography{main}

\onecolumn

\section{APPENDIX: Structured Monte Carlo Sampling for Nonisotropic Distributions via Determinantal Point Processes}

\subsection{Variance Reduction for Evolution Strategies using DPPs}

The goal of this section is to show that it is possible to use DPPs to reduce the variance of Evolution Strategies gradient estimators.

\subsubsection{One dimensional variance reduction using DPPs}

 We start by showing an auxiliary sequence of one dimensional lemmas. We consider the problem of computing an estimator of the sum $\bar{a}$ of $n$ real numbers $a_1, \cdots, a_n$.  In Lemma \ref{lemma::variance_lemma_1} we first show that using DPPs it is always possible to produce an unbiased estimator of the sum of a sequence of real numbers with less or equal variance than the i.i.d estimator that samples each element $a_i$ of the sequence i.i.d. with probability $p_i$. We then show in Lemma \ref{lemma::variance_reduction_lemma_2}  that it is possible to produce a DPP kernel $\mathbf{K}$ such that the corresponding sum estimator has strictly less variance than the i.i.d. one.

We follow the discussion regarding Determinantal Point Process from \cite{kulesza}. Recall that a Determinantal Point Pricess (DPP) $\mathcal{P}$ on a ground set $\mathcal{X}$ with $| \mathcal{X} | = N$ is a probability measure over power set $2^\mathcal{X}$. When $\mathcal{S}$ is a random subset drawn according to $\mathcal{P}$, we have, for every $A \subset \mathcal{X}$. 

\begin{equation*}
    \mathcal{P}\left( A \subset \mathcal{S}   \right) = \det\left(\mathbf{K}_A \right)
\end{equation*}

for some real symmetric $N\times N$ matrix $\mathbf{K}$ indexed by the elements of $\mathcal{X}$. Here $\mathbf{K}_{A} = [\mathbf{K}_{i,j}]_{i,j \in A}$ and adopt $\det(\mathbf{K}_\emptyset) = 1$. $\mathbf{K}$ is known as the marginal kernel. 

Notice that whenever $A = \{i\}$,  $\mathbb{P}(i \in \mathbf{S}) = \mathbf{K}_{i,i}$ and that $\mathbb{P}({i,j} \in \mathcal{S}) = \mathbb{P}(i \in \mathcal{S}) \mathbb{P}(j \in \mathcal{S}) - \mathbf{K}_{i,j}^2$.

We start by showing a basic variance reduction result regarding DPPs. Let $a_1, \cdots, a_n$ be set of real numbers. Let $\bar{a}$ be their sum. We are interested in analyzing the following two estimators of $\bar{a}$:

\begin{enumerate}
    \item $\hat{a}_{\mathrm{i.i.d}} = \sum_{i=1}^n \frac{a_i \epsilon_i}{p_i}$ where $\epsilon_i$ are sampled independent from each other with $\epsilon_i \sim \mathrm{Ber}(p_i)$. 
    \item $\hat{a}_{\mathrm{DPP}} = \sum_{i \in \mathcal{S}} \frac{a_i \epsilon_i }{p_i}$ where $\mathcal{S}$ is a subset of $[n]$ sampled from a DPP with kernel $\mathbf{K}$ satisfying $\mathbf{K}_{i,i} = p_i$ for all $i$. 
\end{enumerate}

Notice that $\mathbb{E}\left[  \hat{a}_{\mathrm{i.i.d}} \right] = \bar{a}$ and $\mathbb{E}\left[  \hat{a}_{\mathrm{DPP}} \right] = \bar{a}$ and therefore $\hat{a}_{\mathrm{i.i.d}}$ and $\hat{a}_{\mathrm{DPP}}$ are unbiased estimators of $\bar{a}$.

\begin{lemma}\label{lemma::variance_lemma_1}
If $a_i \geq 0$ for all $i$, the estimator $\hat{a}_{DPP}$ has smaller variance than $\hat{a}_{i.i.d}$ whenever $\mathbf{K}_{ii} = p_i$ for all $i$.
\end{lemma}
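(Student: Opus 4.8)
The plan is to compute both variances in closed form and compare them term by term. First I would write both estimators in the common form $\hat{a} = \sum_{i=1}^n \frac{a_i \epsilon_i}{p_i}$, so that the only difference between the two lies in the joint law of the Bernoulli ensemble $\{\epsilon_i\}$. For the i.i.d. estimator the $\epsilon_i$ are independent, so the variance decouples and I get $\mathrm{Var}(\hat{a}_{\mathrm{i.i.d}}) = \sum_i \frac{a_i^2}{p_i^2}\,p_i(1-p_i) = \sum_i \frac{a_i^2}{p_i}(1-p_i)$, using $\mathrm{Var}(\epsilon_i) = p_i(1-p_i)$.

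For the DPP estimator I would expand $\mathbb{E}[\hat{a}_{\mathrm{DPP}}^2] = \sum_{i,j}\frac{a_i a_j}{p_i p_j}\mathbb{E}[\epsilon_i\epsilon_j]$ and invoke the two DPP moment identities recalled in the excerpt: on the diagonal $\mathbb{E}[\epsilon_i^2] = \mathbb{E}[\epsilon_i] = p_i$ (since $\epsilon_i$ is $\{0,1\}$-valued and $\mathbf{K}_{ii}=p_i$), and off the diagonal $\mathbb{E}[\epsilon_i\epsilon_j] = \mathbf{K}_{ii}\mathbf{K}_{jj} - \mathbf{K}_{ij}^2 = p_ip_j - \mathbf{K}_{ij}^2$. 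Subtracting $(\mathbb{E}[\hat{a}_{\mathrm{DPP}}])^2 = \bar a^2 = \sum_i a_i^2 + \sum_{i\neq j}a_i a_j$, the cross terms $\sum_{i\neq j}a_ia_j$ cancel and I am left with the clean identity $\mathrm{Var}(\hat{a}_{\mathrm{DPP}}) = \sum_i \frac{a_i^2}{p_i}(1-p_i) - \sum_{i\neq j}\frac{a_ia_j}{p_ip_j}\mathbf{K}_{ij}^2$.

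The conclusion is then immediate: the first sum is exactly $\mathrm{Var}(\hat{a}_{\mathrm{i.i.d}})$, so $\mathrm{Var}(\hat{a}_{\mathrm{DPP}}) = \mathrm{Var}(\hat{a}_{\mathrm{i.i.d}}) - \sum_{i\neq j}\frac{a_ia_j}{p_ip_j}\mathbf{K}_{ij}^2$, and because $a_i \geq 0$, $p_i > 0$, and each $\mathbf{K}_{ij}^2 \geq 0$, the subtracted correction term is nonnegative. Hence $\mathrm{Var}(\hat{a}_{\mathrm{DPP}}) \leq \mathrm{Var}(\hat{a}_{\mathrm{i.i.d}})$, which is the claimed (weak) inequality.

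There is no genuine analytic obstacle here; the argument is a direct second-moment computation. The only point requiring care is the bookkeeping that makes the cross terms cancel — one must track the $i=j$ and $i\neq j$ contributions separately and use $\mathbb{E}[\epsilon_i^2]=p_i$ rather than $p_i^2$ on the diagonal. The nonnegativity hypothesis $a_i \geq 0$ is exactly what guarantees the correction term has a definite sign; this is where it enters, and it also signals why the companion Lemma \ref{lemma::variance_reduction_lemma_2} (strict inequality) must additionally arrange that $\mathbf{K}$ has some off-diagonal entry $\mathbf{K}_{ij}\neq 0$ with $a_i a_j>0$, so that the correction is strictly positive.
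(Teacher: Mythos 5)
Your proposal is correct and follows essentially the same route as the paper: compare second moments (equivalently, variances, since both estimators are unbiased), use the DPP pair-marginal identity $\mathbb{E}[\epsilon_i\epsilon_j]=p_ip_j-\mathbf{K}_{ij}^2$ for $i\neq j$, and observe that the resulting correction term $\sum_{i\neq j}\mathbf{K}_{ij}^2 a_ia_j/(p_ip_j)$ is nonnegative when all $a_i\geq 0$. Your explicit bookkeeping of the diagonal terms and the closed-form variance expressions is slightly more detailed than the paper's, but the argument is the same.
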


\begin{proof}


Since $\hat{a}_{i.i.d}$ and $\hat{a}_{\mathrm{DPP}}$ are unbiased, it is enough to compare the second moments of the said estimators.

\begin{align*}
    \mathbb{E}\left[  \hat{a}_{\mathrm{DPP}}^2 \right] &= \mathbb{E}\left[\sum_{i j} \frac{ a_ia_j \epsilon_i \epsilon_j  }{p_ip_j}      \right] \\
    &= \sum_{i,j} \frac{\mathbb{E}\left[ \epsilon_i\epsilon_j  \right] a_i a_j }{p_ip_j} \\
    &= \sum_{i,j} \frac{ (\mathbf{K}_{ii}\mathbf{K}_{jj} - \mathbf{K}_{ij}^2) a_ia_j   }{p_ip_j} \\
    &= \mathbb{E}\left[\hat{a}_{i.i.d.}^2   \right]  - \sum_{i \neq j}\frac{\mathbf{K}_{i,j}^2 a_ia_j}{p_ip_j} \\
    &\leq \mathbb{E}\left[ \hat{a}_{i.i.d.}^2   \right]
\end{align*}

The last inequality holds whenever $a_i \geq 0$ for all $i$.


\end{proof}

We can also show that under appropriate conditions there exists a kernel matrix $\mathbf{K}$ such that $\mathrm{Var}(\hat{a}_{\mathrm{i.i.d}}) > \mathrm{Var}(\hat{a}_{\mathrm{DPP}})$ such that the inequality is strict.

\begin{lemma}\label{lemma::variance_reduction_lemma_2}
If $n \geq 3$, $p_i > 0$ for all $i$and there exists $i$ such that $p_i < 1$, then there exists a matrix $\mathbf{K} \in \mathbb{R}^{n\times n}$ defining a DPP over  - not necessarily nonnegative- $a_1, \cdots, a_n \in \mathbb{R}$ satisfying $\mathbf{K}_{i,i} = p_i$ and such that  $\mathrm{Var}(\hat{a}_{\mathrm{i.i.d.}}) > \mathrm{Var}(\hat{a}_{\mathrm{DPP}})$.
\end{lemma}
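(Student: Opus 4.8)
The plan is to build directly on the identity already extracted in the proof of Lemma \ref{lemma::variance_lemma_1}. There it was shown that for any marginal kernel $\mathbf{K}$ with $\mathbf{K}_{i,i}=p_i$, since both estimators are unbiased,
\begin{equation*}
\mathrm{Var}(\hat{a}_{\mathrm{i.i.d.}}) - \mathrm{Var}(\hat{a}_{\mathrm{DPP}}) = \sum_{i \neq j} \frac{\mathbf{K}_{i,j}^2\, a_i a_j}{p_i p_j} = 2\sum_{i<j} \frac{\mathbf{K}_{i,j}^2\, a_i a_j}{p_i p_j}.
\end{equation*}
Proving the lemma thus reduces to exhibiting a \emph{valid} marginal kernel $\mathbf{K}$ for which this sum is strictly positive. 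I would do this with the most economical perturbation of a diagonal kernel: fix one pair of indices $(i_0,j_0)$, set $\mathbf{K}_{i_0,j_0}=\mathbf{K}_{j_0,i_0}=c$ for a small $c\neq 0$, leave all other off-diagonal entries equal to $0$, and keep the diagonal equal to $(p_1,\dots,p_n)$. For this $\mathbf{K}$ the whole difference collapses to $\tfrac{2c^2 a_{i_0}a_{j_0}}{p_{i_0}p_{j_0}}$, which is strictly positive precisely when $a_{i_0}a_{j_0}>0$ and $c\neq 0$.

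Two requirements must then be met. First, the pair $(i_0,j_0)$ must satisfy $a_{i_0}a_{j_0}>0$; this is where $n\geq 3$ is used, since among three (nonzero) reals at least two share a sign by pigeonhole, so a same-sign pair exists even though the $a_i$ are allowed to be negative — and if one instead reads the $a_i$ as part of the construction, one simply sets $a_{i_0}=a_{j_0}=1$. Second, and this is the crux, $\mathbf{K}$ must remain a legitimate marginal kernel, i.e. $0 \preceq \mathbf{K} \preceq \mathbf{I}_n$. Because every off-diagonal entry except the $(i_0,j_0)$ one vanishes, the only nontrivial constraints come from the $2\times 2$ principal blocks of $\mathbf{K}$ and of $\mathbf{I}_n-\mathbf{K}$ indexed by $\{i_0,j_0\}$, which require $c^2 \leq p_{i_0}p_{j_0}$ and $c^2 \leq (1-p_{i_0})(1-p_{j_0})$ respectively. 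As long as both $p_{i_0}$ and $p_{j_0}$ lie strictly inside $(0,1)$, both right-hand sides are positive, so any sufficiently small $c\neq 0$ is admissible and the construction goes through.

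I expect the main obstacle to be the interaction between these two requirements: one needs a pair $(i_0,j_0)$ that \emph{simultaneously} has $a_{i_0}a_{j_0}>0$ and has both marginals strictly below one, since an index $k$ with $p_k=1$ forces the $k$-th diagonal entry of $\mathbf{I}_n-\mathbf{K}$ to vanish and hence (by positive semidefiniteness) the entire $k$-th row of $\mathbf{K}$ to vanish, leaving no room for a nonzero off-diagonal. I would therefore first restrict attention to the indices with $p_i<1$, locate the required same-sign pair among them via the pigeonhole/genericity argument above, and only then choose $c$ small enough to satisfy both $2\times 2$ inequalities. The one genuinely technical check is verifying that these two scalar inequalities are \emph{sufficient} for $0\preceq \mathbf{K}\preceq\mathbf{I}_n$ once all remaining off-diagonals are zero, which follows from a short block/Schur-complement argument isolating the $\{i_0,j_0\}$ block.
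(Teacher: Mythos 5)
Your proposal is correct and follows essentially the same route as the paper's proof: both reduce the claim to making $\sum_{i\neq j}\mathbf{K}_{i,j}^2 a_i a_j/(p_i p_j)$ strictly positive by perturbing the diagonal kernel $\mathrm{diag}(p_1,\dots,p_n)$ with small off-diagonal entries supported on same-sign pairs (the paper places $\epsilon$ on every such pair, you on a single one) and then taking the perturbation small enough that $0\preceq\mathbf{K}\preceq\mathbf{I}_n$. If anything, your treatment is the more careful one: the explicit $2\times 2$ conditions $c^2\le p_{i_0}p_{j_0}$ and $c^2\le(1-p_{i_0})(1-p_{j_0})$, together with the observation that an index with $p_k=1$ forces the entire $k$-th row of $\mathbf{K}$ off the diagonal to vanish, make precise the step the paper compresses into the assertion $0\prec\mathrm{diag}(p_i)\prec\mathbf{I}$.
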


\begin{proof}
Let $\mathbf{K}$ be a matrix defining a DPP with $\mathbf{K}_{i,i} = p_i$ for all $i$     Following the exact same proof as in Lemma \ref{lemma::variance_lemma_1}, we conclude that $\text{Var}(\hat{a}_{\mathrm{i,i.d}}) > \text{Var}(\hat{a}_{\mathrm{DPP}})$ iff:
    
    \begin{equation}\label{equation::variance_comparison_condition}
        \sum_{i\neq j}\frac{\mathbf{K}_{i,j}^2 a_i a_j}{p_i p_j} > 0
    \end{equation}

We show the existence of a kernel matrix $\mathbf{K}$ for which the inequality \ref{equation::variance_comparison_condition} holds and $\mathbf{K}_{i,i} = p_i$ for all $i$.

Indeed, let $\mathbf{K}  \in \mathbb{R}^{n\times n}$ be such that:

\begin{equation*}
    \mathbf{K}_{i,j} = \begin{cases}
        p_i & \text{if } i = j \\
        \epsilon & \text{if } \frac{a_i a_j}{p_ip_j}  \geq 0\\
        0 & \text{o.w.}
        \end{cases}
\end{equation*}

For some $\epsilon > 0$. Under this definition, notice that $\sum_{i,j} \frac{\mathbf{K}_{i,j}^2 a_i a_j} { p_i p_j} > 0$ and notice that since $0 \prec \text{diag}(p_i) \prec I$, there exists a choice of $\epsilon >0$ such that $0 \prec \mathbf{K} \prec \mathbb{I}_d $, thus defining a valid DPP kernel matrix $\mathbf{K}$.

\end{proof}

\subsubsection{Towards variance reduction for vector estimators using DPPs.}

In this section we extend the results of the previous section to the multi dimensional case of Monte Carlo gradient estimators.  We start with an auxiliary lemma that will be used in the variance reduction Theorems  of the following sections. The following Lemma characterizes the maximum number of vectors that can all be pairwise negatively correlated. This Lemma will be used later on to argue the existence of a DPP kernel $\mathbf{K}$ for which its subsampling estimator of the Evolution Strategies gradient estimator achieves less variance than the i.i.d. subsampling estimator.

\begin{lemma}\label{lemma::vectors_negative_correlations}
Let $\mathbf{v}^1, \cdots, \mathbf{v}^M \in \mathbb{R}^d$ vectors such that $\langle \mathbf{v}^i , \mathbf{v}^j \rangle < 0$ for all $i \neq j$. Then $M \leq d+1$.
\end{lemma}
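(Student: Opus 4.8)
The plan is to prove the bound by induction on the ambient dimension $d$, at each step reducing the dimension by one through orthogonal projection onto the hyperplane perpendicular to a distinguished vector.

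First I would settle the base case $d=1$. Here each $\mathbf{v}^i$ is a nonzero real number (a zero vector would have inner product $0$ with the others, violating strict negativity whenever $M \geq 2$), and $\langle \mathbf{v}^i, \mathbf{v}^j\rangle = \mathbf{v}^i \mathbf{v}^j < 0$ forces any two of them to have opposite signs. Since three reals cannot be pairwise of opposite sign, we get $M \leq 2 = d+1$.

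For the inductive step, assume the claim in dimension $d-1$ and take $\mathbf{v}^1, \dots, \mathbf{v}^M \in \mathbb{R}^d$ with $\langle \mathbf{v}^i, \mathbf{v}^j\rangle < 0$ for all $i \neq j$; as above, each $\mathbf{v}^i \neq 0$. Singling out $\mathbf{v}^M$, I decompose each of the other vectors as $\mathbf{v}^i = \mathbf{w}^i + c_i \mathbf{v}^M$ with $\mathbf{w}^i \perp \mathbf{v}^M$ and $c_i = \langle \mathbf{v}^i, \mathbf{v}^M\rangle / \|\mathbf{v}^M\|^2$; by the hypothesis $\langle \mathbf{v}^i, \mathbf{v}^M\rangle < 0$, so $c_i < 0$. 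The projected vectors $\mathbf{w}^1, \dots, \mathbf{w}^{M-1}$ lie in the $(d-1)$-dimensional subspace orthogonal to $\mathbf{v}^M$. The key computation is that, for $i \neq j$ with $i,j < M$, the cross terms involving $\mathbf{v}^M$ vanish, giving
\[
\langle \mathbf{w}^i, \mathbf{w}^j\rangle = \langle \mathbf{v}^i, \mathbf{v}^j\rangle - c_i c_j \|\mathbf{v}^M\|^2 .
\]
Since $\langle \mathbf{v}^i, \mathbf{v}^j\rangle < 0$ and $c_i c_j > 0$ (both factors negative), the right-hand side is strictly negative. Thus the $M-1$ vectors $\mathbf{w}^i$ again have pairwise strictly negative inner products in a $(d-1)$-dimensional space, so the induction hypothesis yields $M-1 \leq (d-1)+1 = d$, i.e. $M \leq d+1$.

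The only points requiring care are the projection identity and the sign bookkeeping: confirming $c_i < 0$ directly from $\langle \mathbf{v}^i, \mathbf{v}^M\rangle < 0$, and noting that the strict negativity of $\langle \mathbf{w}^i, \mathbf{w}^j\rangle$ is automatically preserved, which in particular forces each $\mathbf{w}^i \neq 0$ whenever $M-1 \geq 2$ (the case $M-1 \leq 1$ being trivial since then $M \leq 2 \leq d+1$). This ensures no degenerate vectors slip into the inductive hypothesis. I do not anticipate any serious obstacle beyond this; the argument is a clean dimension-reduction induction.
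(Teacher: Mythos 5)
Your proof is correct, and it takes a genuinely different route from the paper's. The paper argues by contradiction: assuming $M \geq d+2$, it takes a linear dependence $\sum_{i=1}^{d+1} a_i \mathbf{v}^i = 0$ among the first $d+1$ vectors, splits the coefficients by sign into $\sum_{i \in I} a_i \mathbf{v}^i = \sum_{j \in J} (-a_j)\mathbf{v}^j$, and derives a contradiction by pairing the two sides (whose inner product must be a nonnegative squared norm yet expands to a strictly negative sum), using the extra vector $\mathbf{v}^{d+2}$ to dispose of the degenerate case where all coefficients share a sign. You instead induct on the dimension, projecting onto the hyperplane $(\mathbf{v}^M)^\perp$ and verifying that the identity $\langle \mathbf{w}^i, \mathbf{w}^j\rangle = \langle \mathbf{v}^i, \mathbf{v}^j\rangle - c_i c_j \|\mathbf{v}^M\|^2$ preserves strict negativity because $c_i c_j > 0$; your sign bookkeeping and the handling of the degenerate cases ($M \leq 2$, possible zero projections) are all in order. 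The paper's argument is shorter and avoids induction, while yours makes the source of the ``$+1$'' transparent (one vector is absorbed per dimension until two survive in $\mathbb{R}^1$) and is arguably more self-contained, since it never needs to invoke a $(d+2)$-nd vector to rescue a special case. Both are standard proofs of this classical fact and either would serve the paper's purpose.
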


\begin{proof}
We proceed with a proof by contradiction. Let's assume $M \geq d+2$. Let $\mathbf{v}^1, \cdots, \mathbf{v}^{d+1}$ be a subset of $d+1$ vectors of $\{\mathbf{v}^j \}_{j=1}^M$. There exist $a_1, \cdots, a_{d+1} \in \mathbb{R}$ such that:
\begin{equation*}
    \sum_{i=1}^{d+1} a_i \mathbf{v}^i = 0
\end{equation*}
If $a_i \geq 0$ for all $i$ then $\langle \mathbf{v}^{d+2}, \sum_{i} a_i \mathbf{v}^i \rangle = \sum_i a_i \langle \mathbf{v}^{d+2}, \mathbf{v}^i \rangle < 0$ which would result in a contradiction.
If $a_i$ are not all nonnegative, there exist disjoint subsets $I \subset [ d+2 ] $ and $K \subset [d+2]$ such that $I \cup J = [d+2]$, and $I \cap J = \emptyset$  and $I, J \neq \emptyset$ and with $a_i \geq 0$ for all $i \in I$ (with at least one $a_i > 0$) and $a_j \leq 0$ (with at least one $a_j < 0$) for all $j \in J$ such that:
\begin{equation*}
    \underbrace{\sum_{i \in I} a_i \mathbf{v}^i}_{I} =\underbrace{ \sum_{j\in J} -a_j \mathbf{v}^j }_{II}
\end{equation*}
Therefore by assumption $\langle I, II \rangle < 0$ which would cause a contradiction since $I = II$.

\end{proof}

Recall the gradient estimator corresponding to Evolution Strategies. If $f : \mathbb{R}^d \rightarrow \mathbb{R}$, the ES gradient estimator $\nabla f_\sigma(\theta)$ at $\theta$ equals:

\begin{equation*}
    \nabla f_\sigma(\theta) = \mathbb{E}_{\mathbf{v} \sim \mathcal{N}(0,\mathbb{I}_d)}\left[ \frac{1}{\sigma}f(\theta + \sigma \mathbf{v}) \mathbf{v}  \right]
\end{equation*}

We denote by $\hat{\nabla} f_\sigma(\theta)= \frac{1}{n\sigma} \sum_{i=1}^n f(\theta + \sigma \mathbf{v}^i) \mathbf{v}^i$ where $\mathbf{v}^i$ are all samples from a standard Gaussian $\mathcal{N}(0, \mathbb{I}_d)$.

\subsubsection{Subsampling strategies in ES}

In this section we consider subsampling strategies for Evolution strategies when we have a dictionary of $N$ sensing directions $\{\mathbf{v}^i\}_{i=1}^N$. Let $\{p_i\}_{i=1}^N$ be the ensemble of probabilities with which to sample (according to a Bernoulli trial with probability $p_i$) each sensing $i$. 

We recognize two cases:

\begin{enumerate}
    \item \textbf{Unbiased sampling} In this case we consider a subsampled-importance sampling weighted version of the empirical estimator $\hat{\nabla} f_\sigma(\theta) = \frac{1}{\sigma N}\sum_{i=1}^N \mathbf{v}^i f(\theta + \sigma \mathbf{v}^i)$ of the form $\hat{\nabla}_{ U} f_\sigma(\theta) = \frac{1}{N\sigma} \sum_{i=1}^N\frac{\epsilon_i}{p_i} \mathbf{v}^i f(\theta + \sigma \mathbf{v}^i)$.
    \item \textbf{Biased} In this case we consider a subsampled version of the empirical estimator $\hat{\nabla} f_\sigma(\theta)$ of the form $\hat{\nabla}_B f_\sigma(\theta) = \frac{1}{\sigma N} \sum_{i=1}^N \frac{\epsilon_i}{w_i} \mathbf{v}^i f(\theta + \sigma \mathbf{v}^i)$ where $\{w_i\}_{i=1}^N$ is a set of importance weights, not necessarily equal to $\{p_i\}$.
\end{enumerate}

The crucial observation behind these estimators is that the evaluation of $f$ need not be performed at the points that are not subsampled. This allows us to trade off computation with variance (or mean squared error). We would like to achieve the optimal tradeoff. 

\textbf{Unbiased subsampling}

The goal of this section is to show that for any i.i.d. subsampling strategy to build an unbiased estimator for the ES gradient, there exists a DPP kernel such that the DPP unbiased subsampling estimator achieves less variance than the i.i.d. one. 

The main result of this section, Theorem \ref{theorem::auxiliary_dpp_variance_1} concerns the estimation of functions of the form $F : \mathbb{R}^d \rightarrow \mathbb{R}^m$ as defined in Section \ref{base_eq}, and shows that for any fixed subsampling i.i.d. strategy (encoded by subsampling probabilities $\{p_i\}$), there exists a marginal kernel $\mathbf{K}$ whose corresponding estimator achieves the same mean but has (strictly) less variance. We prove Theorem \ref{theorem::unbiased_dpp_variance_reduction_evolution_strategies}   which specializes Theorem \ref{theorem::auxiliary_dpp_variance_1} to the case of ES gradients. A simple notational change would render the proof valid for Theorem \ref{theorem::auxiliary_dpp_variance_1}.  

The following corresponds to Theorem \ref{theorem::unbiased_dpp_variance_reduction} in the main text.

\begin{theorem}\label{theorem::auxiliary_dpp_variance_1}
If $N \geq d+2$ and $p_i  <1 $ for all $i$, there exists a Marginal Kernel $\mathbf{K} \in \mathbb{R}^{N\times N}$ such that:
\begin{align*}
    \mathbb{E}_{\{\epsilon_i\} \sim DPP(\mathbf{K})}\left[  \hat{F}(\theta)_U^{DPP}  \right] &= \mathbb{E}_{\{ \epsilon_i \}  \sim \{Ber(p_i) \} }\left[  \hat{F}(\theta)_U^{iid}  \right] \\
    &= \frac{1}{N} \sum_{i=1}^N h_\theta(\mathbf{v}^i)
\end{align*}
And:
\begin{equation*}
   \mathrm{Var}( \hat{F}(\theta)^{DPP}_U  ) <  \mathrm{
    Var}( \hat{F}(\theta)^{iid}_U  ) 
\end{equation*}
\end{theorem}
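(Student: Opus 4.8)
The plan is to follow the scalar template of Lemma \ref{lemma::variance_lemma_1} and its strict refinement Lemma \ref{lemma::variance_reduction_lemma_2}, lift them to the vector-valued estimator, and then supply the one genuinely new ingredient---the existence of a nonnegatively correlated pair of evaluations---via Lemma \ref{lemma::vectors_negative_correlations}. Writing $w^i := h_\theta(\mathbf{v}^i)$, I would first dispatch the mean identity: any marginal kernel I construct will have $\mathbf{K}_{i,i} = p_i$, so linearity gives $\mathbb{E}[\hat{F}(\theta)_U^{\mathrm{DPP}}] = \frac{1}{N}\sum_i \frac{\mathbf{K}_{i,i}}{p_i} w^i = \frac{1}{N}\sum_i w^i$, matching the i.i.d. estimator; hence both are unbiased for the same target and it suffices to compare second moments.

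For the variance I would expand $\|\hat{F}\|^2 = \frac{1}{N^2}\sum_{i,j}\frac{\epsilon_i\epsilon_j}{p_ip_j}\langle w^i, w^j\rangle$. Since the means agree, the variance gap equals the second-moment gap, and the diagonal terms cancel because $\mathbb{E}[\epsilon_i^2] = p_i$ in both ensembles. Substituting $\mathbb{E}[\epsilon_i\epsilon_j] = \mathbf{K}_{i,i}\mathbf{K}_{j,j} - \mathbf{K}_{i,j}^2$ for the DPP against $p_ip_j$ for the i.i.d. case reduces the whole theorem to establishing a single strict inequality, the exact vector analogue of the scalar condition in Lemma \ref{lemma::variance_reduction_lemma_2}:
\begin{equation*}
\sum_{i \neq j} \frac{\mathbf{K}_{i,j}^2}{p_i p_j}\,\langle w^i, w^j \rangle > 0 .
\end{equation*}

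It then remains to exhibit one $\mathbf{K}$ realizing this while staying a legitimate marginal kernel (symmetric, spectrum in $[0,1]$). This is where $N \geq d+2$ is used: the vectors $w^1, \dots, w^N \in \mathbb{R}^d$ are too numerous to be pairwise negatively correlated, so Lemma \ref{lemma::vectors_negative_correlations} forces an index pair $(i_0, j_0)$ with $\langle w^{i_0}, w^{j_0}\rangle \geq 0$. I would then take $\mathbf{K} = \mathrm{diag}(p_1, \dots, p_N)$ and place the symmetric perturbation $\mathbf{K}_{i_0, j_0} = \mathbf{K}_{j_0, i_0} = \epsilon$ with all other off-diagonals zero, so the displayed sum collapses to $\frac{2\epsilon^2}{p_{i_0}p_{j_0}}\langle w^{i_0}, w^{j_0}\rangle$. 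Validity of $\mathbf{K}$ is a perturbation argument: since $0 < p_i < 1$ the diagonal part has spectrum strictly inside $(0,1)$, so for $\epsilon$ small enough the perturbed matrix keeps every eigenvalue in $(0,1)$ and defines a genuine DPP.

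The hard part will be the strictness. Lemma \ref{lemma::vectors_negative_correlations} only excludes a strictly-negative pair, so it guarantees $\langle w^{i_0}, w^{j_0}\rangle \geq 0$, whereas strict variance reduction needs $> 0$; the degenerate configuration in which every nonnegative pair is exactly orthogonal yields only equality. Ruling this out is what forces an appeal to the genericity of the sensing directions $\mathbf{v}^i \sim \mathcal{N}(0,\mathbf{I}_d)$, under which exact orthogonality of a pair of $w^i$ is a measure-zero event, so the pair supplied by Lemma \ref{lemma::vectors_negative_correlations} is almost surely strictly positive. The remaining difficulty---confining the perturbed $\mathbf{K}$ to the Loewner interval $[0,\mathbf{I}_N]$---is purely bookkeeping and is handled by taking $\epsilon$ sufficiently small.
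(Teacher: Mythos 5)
Your proposal follows essentially the same route as the paper's proof (carried out there for the ES specialization, Theorem \ref{theorem::unbiased_dpp_variance_reduction_evolution_strategies}): reduce the claim to the sign of the cross-term $\sum_{i\neq j}\frac{\mathbf{K}_{i,j}^2}{p_ip_j}\langle h_\theta(\mathbf{v}^i), h_\theta(\mathbf{v}^j)\rangle$, invoke Lemma \ref{lemma::vectors_negative_correlations} with $N\geq d+2$ to obtain a nonnegatively correlated pair, and realize $\mathbf{K}$ as a small symmetric off-diagonal perturbation of $\mathrm{diag}(p_1,\dots,p_N)$ kept inside the Loewner interval $(0,\mathbf{I}_N)$. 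The strictness gap you flag is genuine --- the lemma only delivers $\langle w^{i_0}, w^{j_0}\rangle \geq 0$, and the paper silently asserts strict positivity at that step --- so your almost-sure genericity patch is if anything more careful than the published argument, though it imports a distributional assumption on the $\mathbf{v}^i$ that the theorem statement itself does not impose.
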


We show the corresponding result for the case when $\mathbb{F} = \nabla f_\sigma(\theta)$. The proof is exactly the same as in the case when considering any other type of function $F : \mathbb{R}^d \rightarrow \mathbb{R}^m$ defined as in Section \ref{base_eq}.

Let $\mathbf{K}$ be a marginal kernel matrix defining a DPP whose samples we index as $(\epsilon_1, \cdots, \epsilon_N)$ with $\epsilon_i \in \{ 0,1\}$ and such that the ensemble follows the DPP process. We consider the following subsampled ES estimator:

\begin{equation*}
    \hat{\nabla}^{DPP}_{U}  f_\sigma(\theta) = \frac{1}{N\sigma} \sum_{i\in S}    \frac{\epsilon_i }{  p_i}f(\theta + \sigma \mathbf{v}^i)\mathbf{v}^i
\end{equation*}

\begin{theorem}\label{theorem::unbiased_dpp_variance_reduction_evolution_strategies}
There exists a marginal kernel $\mathbf{K} \in \mathbb{R}^{N \times N}$ such that $\widehat{\mathrm{MSE}}(\hat{\nabla}^{\mathrm{DPP}}_{\mathrm{U}} f_\sigma(\theta)) < \widehat{\mathrm{MSE}}( \hat{\nabla}_{\mathrm{U}} f_\sigma(\theta)   ) $
\end{theorem}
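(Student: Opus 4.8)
The plan is to reduce the vectorial statement to the one-dimensional second-moment computation already performed in Lemmas \ref{lemma::variance_lemma_1} and \ref{lemma::variance_reduction_lemma_2}, supplying the geometry through Lemma \ref{lemma::vectors_negative_correlations}; this is exactly the specialization of Theorem \ref{theorem::auxiliary_dpp_variance_1} to ES gradients. First I would write $\mathbf{u}^i = \frac{1}{N\sigma} f(\theta + \sigma \mathbf{v}^i)\mathbf{v}^i \in \mathbb{R}^d$, so that both estimators read $\sum_i \frac{\epsilon_i}{p_i}\mathbf{u}^i$. Since $\mathbb{E}[\epsilon_i] = \mathbf{K}_{i,i} = p_i$ in both the DPP and the i.i.d.\ case, the two estimators share the mean $\sum_i \mathbf{u}^i = \hat{\nabla} f_\sigma(\theta)$; hence both are unbiased and their MSE coincides with their variance, so it suffices to compare second moments.

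Next I would expand the second moment exactly as in the scalar proof. Writing
\begin{equation*}
\mathbb{E}\Big[\big\|\textstyle\sum_i \tfrac{\epsilon_i}{p_i}\mathbf{u}^i\big\|^2\Big] = \sum_{i,j}\frac{\mathbb{E}[\epsilon_i\epsilon_j]}{p_i p_j}\langle \mathbf{u}^i, \mathbf{u}^j\rangle,
\end{equation*}
and substituting $\mathbb{E}[\epsilon_i\epsilon_j] = p_i p_j$ (i.i.d.) versus $\mathbb{E}[\epsilon_i\epsilon_j] = p_i p_j - \mathbf{K}_{i,j}^2$ (DPP) for $i\neq j$, together with $\mathbb{E}[\epsilon_i^2] = p_i$ in both cases, the diagonal terms cancel and I obtain the key identity
\begin{equation*}
\mathrm{Var}(\hat{\nabla}^{\mathrm{DPP}}_U f_\sigma(\theta)) - \mathrm{Var}(\hat{\nabla}^{\mathrm{iid}}_U f_\sigma(\theta)) = -\sum_{i\neq j}\frac{\mathbf{K}_{i,j}^2}{p_i p_j}\langle \mathbf{u}^i, \mathbf{u}^j\rangle.
\end{equation*}
The entire problem thus reduces to exhibiting a legitimate marginal kernel $\mathbf{K}$ (symmetric, $\mathbf{K}_{i,i}=p_i$, $0 \preceq \mathbf{K} \preceq \mathbb{I}_N$) for which $\sum_{i\neq j}\frac{\mathbf{K}_{i,j}^2}{p_i p_j}\langle \mathbf{u}^i, \mathbf{u}^j\rangle > 0$.

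I would then invoke Lemma \ref{lemma::vectors_negative_correlations}: since $N \geq d+2 > d+1$, the vectors $\mathbf{u}^1,\dots,\mathbf{u}^N$ cannot be pairwise strictly negatively correlated, so some pair $(i_0,j_0)$ has $\langle \mathbf{u}^{i_0}, \mathbf{u}^{j_0}\rangle \geq 0$. Mimicking the construction in Lemma \ref{lemma::variance_reduction_lemma_2}, I set $\mathbf{K} = \mathrm{diag}(p_1,\dots,p_N)$ plus a symmetric perturbation of size $\epsilon>0$ placed in every off-diagonal slot with $\langle \mathbf{u}^i, \mathbf{u}^j\rangle \geq 0$ and $0$ elsewhere. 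For this choice the target sum equals $\epsilon^2$ times a sum of nonnegative terms, and since $0 \prec \mathrm{diag}(p_i) \prec \mathbb{I}_N$ (using $0 < p_i < 1$), a small enough $\epsilon$ keeps $0 \prec \mathbf{K} \prec \mathbb{I}_N$, so $\mathbf{K}$ is a valid DPP marginal kernel.

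The main obstacle is promoting the weak inequality from Lemma \ref{lemma::vectors_negative_correlations} (a nonnegative pair) to the \emph{strict} reduction the theorem asserts, which needs a pair with strictly positive inner product. The $d=1$ configuration $\mathbf{u} = (0,1,-1)$ shows this is not automatic: a vanishing $\mathbf{u}^i$ or an exactly orthogonal pair forces every admissible $\mathbf{K}$ to yield only equality. I would dispatch this by exploiting that for $\mathbf{v}^i \sim \mathcal{N}(0, \mathbb{I}_d)$ one has, almost surely, $\mathbf{v}^i \neq 0$ and $\langle \mathbf{v}^i, \mathbf{v}^j\rangle \neq 0$ for all $i \neq j$; hence (off the measure-zero event that some $f(\theta+\sigma\mathbf{v}^i)$ vanishes) every $\langle \mathbf{u}^i, \mathbf{u}^j\rangle$ is nonzero, so the nonnegative pair supplied by the lemma is in fact strictly positive and the kernel above delivers strict variance reduction.
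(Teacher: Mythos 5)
Your proposal follows essentially the same route as the paper's proof: reduce to a second-moment comparison, isolate the cross term $\sum_{i\neq j}\frac{\mathbf{K}_{i,j}^2}{p_i p_j}\langle \mathbf{u}^i,\mathbf{u}^j\rangle$, and combine Lemma \ref{lemma::vectors_negative_correlations} with the $\epsilon$-perturbation construction of Lemma \ref{lemma::variance_reduction_lemma_2} to exhibit a valid marginal kernel making that term positive. You are in fact more careful than the paper at the one delicate step: Lemma \ref{lemma::vectors_negative_correlations} only yields a pair with \emph{nonnegative} inner product, and your almost-sure argument for Gaussian sensings (modulo the caveat that $f(\theta+\sigma\mathbf{v}^i)\neq 0$ must also hold, which needs a mild genericity assumption on $f$) is what upgrades this to the strict inequality the theorem asserts --- a point the paper's proof glosses over when it claims the existence of strictly positive off-diagonal entries.
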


\begin{proof}
Since $\mathbb{E}\left[  \hat{\nabla}^{\mathrm{DPP}}_U f_\sigma(\theta)  \right] = \mathbb{E}\left[ \hat{\nabla}_\mathrm{U} f_\sigma(\theta)      \right] $, it is enough to show the desired statement for the square norms of these vectors.

\begin{align*}
    \| \hat{\nabla}^{\mathrm{DPP}}_\mathrm{U} f_\sigma(\theta)  \|^2   &= \sum_{j=1}^d \left( \frac{1}{\sigma N} \sum_{i\in S}  \frac{\epsilon_i}{p_i} f(\theta + \sigma \mathbf{v}^i) \mathbf{v}^i(j)  \right)^2  \\
    &= \frac{1}{\sigma^2 N^2} \sum_{j=1}^d  \left( \sum_{i\in S} \frac{\epsilon_i}{p_i}f(\theta + \sigma \mathbf{v}^i) \mathbf{v}^i(j)  \right)^2\\
    &= \frac{1}{\sigma^2N^2} \sum_{j=1}^d\left( \sum_{i ,k \in S} \frac{\epsilon_i \epsilon_k}{p_ip_k} f(\theta + \sigma \mathbf{v}^i)f(\theta + \sigma \mathbf{v}^k) \mathbf{v}^i(j)\mathbf{v}^k(j)      \right)
    \end{align*}

Therefore:

\begin{align*}
    \mathbb{E}\left[ \| \hat{\nabla}^{\mathrm{DPP}}_{\mathrm{U}} f_\sigma(\theta)  \|^2  \right] &=   \mathbb{E}\left[  \frac{1}{\sigma^2N^2} \sum_{j=1}^d\left( \sum_{i ,k \in \mathcal{S}} \frac{\epsilon_i \epsilon_k}{p_ip_k} f(\theta + \sigma \mathbf{v}^i)f(\theta + \sigma \mathbf{v}^k) \mathbf{v}^i(j)\mathbf{v}^k(j)      \right)  \right] \\
    &= \frac{1}{\sigma^2 N^2} \sum_{j=1}^d \left(    \sum_{i,k } \frac{\mathbb{E}[ \epsilon_i \epsilon_k ]}{p_i p_k}    f(\theta + \sigma \mathbf{v}^i)f(\theta + \sigma \mathbf{v}^k) \mathbf{v}^i(j)\mathbf{v}^k(j)   \right) \\
     &= \frac{1}{\sigma^2 N^2} \sum_{j=1}^d \left(    \sum_{i\neq k} \frac{\mathbf{K}_{i,i} \mathbf{K}_{k,k} - \mathbf{K}_{i,k}^2}{p_i p_k}    f(\theta + \sigma \mathbf{v}^i)f(\theta + \sigma \mathbf{v}^k) \mathbf{v}^i(j)\mathbf{v}^k(j)   \right) +\\
     &\frac{1}{\sigma^2 N^2} \sum_{j=1}^d \left(    \sum_{i=1}^N \frac{\mathbf{K}_{i,i}}{p_i^2}    f^2(\theta + \sigma \mathbf{v}^i) \left(\mathbf{v}^i\right)^2(j)  \right) 
\end{align*}

Let $K_{i,i} = p_i$ for all $i$. The expression above becomes:

\begin{align*}
    \mathbb{E}\left[ \| \hat{\nabla}^{\mathrm{DPP}}_{\mathrm{U}} f_\sigma(\theta)  \|^2  \right]  &= \mathbb{E}\left[  \| \hat{\nabla}_{\mathrm{U}} f_\sigma(\theta)   \|^2 \right] - \frac{1}{\sigma^2 N^2} \sum_{j=1}^d \left( \sum_{i \neq k}    \frac{\mathbf{K}_{i,k}^2}{p_i p_k}f(\theta + \sigma \mathbf{v}_i) f(\theta + \sigma \mathbf{v}_k) \mathbf{v}_i(j) \mathbf{v}_k(j)    \right) \\
    &= \mathbb{E}\left[  \| \hat{\nabla}_{\mathrm{U}} f_\sigma(\theta)   \|^2 \right] - \frac{1}{\sigma^2 N^2} \sum_{i \neq k} \frac{\mathbf{K}_{i,k}^2}{p_i p_k} f(\theta + \sigma \mathbf{v}^i) f(\theta + \sigma \mathbf{v}^j) \left(\sum_j \mathbf{v}^i(j)\mathbf{v}^k(j) \right) \\
        &= \mathbb{E}\left[  \| \hat{\nabla}_{\mathrm{U}} f_\sigma(\theta)   \|^2 \right] - \underbrace{ \frac{1}{\sigma^2 N^2} \sum_{i \neq k} \frac{\mathbf{K}_{i,k}^2}{p_ip_k} f(\theta + \sigma \mathbf{v}^i) f(\theta + \sigma \mathbf{v}^j) \langle \mathbf{v}^i, \mathbf{v}^k \rangle }_{I}
\end{align*}

Let $\mathbf{V} \in \mathbb{R}^{d \times N}$ where the $i-$th column of $\mathbf{V}$ equals $\mathbf{v}^i$, and let $\mathbf{D} \in \mathbb{R}^{N\times N}$ a diagonal matrix such that $\mathbf{D}_{i,i} = \frac{ f(\theta + \sigma \mathbf{v}^i) }{p_i \sigma N} $. Let $\mathbf{K}^0 \in \mathbb{R}^{N\times N}$ be a matrix having zero diagonal entries and such that $\mathbf{K}^0_{i, j}  = \mathbf{K}_{i,j}$ with $i\neq j$. Similarly to the proof of Lemma \ref{lemma::variance_reduction_lemma_2}, let's focus on term I. 

\begin{align*}
    \frac{1}{\sigma^2 N^2} \sum_{i \neq k}\frac{ \mathbf{K}_{i,k}^2 }{p_i p_k} f(\theta + \sigma \mathbf{v}^i) f(\theta + \sigma \mathbf{v}^j) \langle \mathbf{v}^i, \mathbf{v}^k \rangle &= \sum_{i \neq k} \frac{\mathbf{K}_{i,k}^2}{\sigma^2 N^2 p_ip_k} f(\theta + \sigma \mathbf{v}^i) f(\theta + \sigma \mathbf{v}^j) \langle \mathbf{v}^i, \mathbf{v}^k \rangle \\
    &= \langle  \left(\mathbf{K}^0 \right)^2, \mathbf{D}^\top \mathbf{V}^\top \mathbf{D} \mathbf{V} \rangle
\end{align*}

We denote by $\left(\mathbf{K}^0\right)^2$ be the matrix $\mathbf{K}^0$ with entries squared. Where $ \langle \left(\mathbf{K}^0\right)^2, \mathbf{D}^\top \mathbf{V}^\top \mathbf{D} \mathbf{V} \rangle = \text{trace}( \left( \mathbf{K}^0\right)^2 \mathbf{D}^\top \mathbf{V}^\top \mathbf{D} \mathbf{V} )$. Define $\mathbf{K}^0$ in this way, for $i \neq j$. Let $\epsilon > 0$:

\begin{equation*}
    (\mathbf{K}^0)_{i,j} = \begin{cases}
            \epsilon &\text{if } f(\theta + \sigma \mathbf{v}^i) f(\theta + \sigma \mathbf{v}^j) \langle \mathbf{v}^i, \mathbf{v}^k \rangle  > 0 \\
            0 &\text{o.w.} 
        \end{cases}
\end{equation*}

Let $\mathbf{V}\mathbf{D}$ be the matrix with columns equal to $\mathbf{v}^i f(\theta + \sigma \mathbf{v}^i)$ and define $\mathbf{W} = \mathbf{V}\mathbf{D}$. Consider $\mathbf{J} = \mathbf{W}^\top \mathbf{W}$ and define $\mathbf{J}^0$ be the matrix $\mathbf{J}$ without its diagonal entries. Since $N \geq d+2$, Lemma \ref{lemma::vectors_negative_correlations} there must be at least two positive non diagonal entries of $J$ and therefore in this case $\langle  \left(\mathbf{K}^0 \right)^2, \mathbf{D}^\top \mathbf{V}^\top \mathbf{D} \mathbf{V} \rangle > 0$. 

If $\mathbf{K}_{i,i} = p_i < 1$ for all $i$ then following an argument similar to the proof of \ref{lemma::variance_reduction_lemma_2}, we conclude there exists $\epsilon > 0$ such that $0 \prec \mathbf{K} \prec \mathbb{I}_d$ such that $\widehat{\mathrm{MSE}}(\hat{\nabla}^{\mathrm{DPP}}_{\mathrm{U}} f_\sigma(\theta)) < \widehat{\mathrm{MSE}}( \hat{\nabla}_{\mathrm{U}} f_\sigma(\theta)   ) $ as desired.

\end{proof}

Theorem \ref{theorem::auxiliary_dpp_variance_1}, yields the following corollary (corresponding to Corollary \ref{corollary::unbiased_ES_variance_reduction} in the main text). Under i.i.d. uniform sampling ($p_i = p$ for all $i$):

\begin{corollary}
Let $\mathbf{v}^1, \cdots, \mathbf{v}^N \sim \mathcal{N}(0, I_d)$ be normally distributed sensings sampled i.i.d. Let $\hat{\nabla}_U f_\sigma(\theta)$ and $\hat{ \nabla }_U^{DPP} f_\sigma(\theta)$ be subsampled gradients with $p_i = p < 1$ for all $i$ where $\hat{\nabla}_U^{DPP} f_\sigma(\theta)$ is produced with a kernel as in Theorem \ref{theorem::unbiased_dpp_variance_reduction}. The following hold:
\begin{equation*}
    \mathbb{E}\left[ \hat{\nabla}_U^{DPP} f_\sigma(\theta)   \right] = \mathbb{E}\left[ \hat{\nabla}_U f_\sigma(\theta)  \right] = \nabla f_\sigma(\theta) 
\end{equation*}
And:
\begin{equation*}
    \widehat{\mathrm{MSE}}(\hat{\nabla}^{DPP}_{U} f_\sigma(\theta)) < \widehat{\mathrm{MSE}}( \hat{\nabla}_{U} f_\sigma(\theta)   )
\end{equation*}

\end{corollary}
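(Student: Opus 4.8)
The plan is to obtain this Corollary as the specialization of Theorem \ref{theorem::unbiased_dpp_variance_reduction_evolution_strategies} to the case of uniform inclusion probabilities $p_i = p$ together with i.i.d.\ Gaussian sensings. Relative to that theorem, which fixes the sensing directions $\mathbf{v}^i$ and compares the two subsampling estimators only over the randomness of $\{\epsilon_i\}$, the two genuinely new ingredients here are: \textbf{(i)} propagating unbiasedness all the way back to the true smoothed gradient $\nabla f_\sigma(\theta)$, and \textbf{(ii)} transferring the conditional (over $\epsilon$) variance gap into a gap for the \emph{full} variance that also averages over the random sensings. Both are handled by the tower property and the law of total variance.

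For unbiasedness I would first condition on the sensings $\{\mathbf{v}^i\}$. Since the marginal kernel satisfies $\mathbf{K}_{i,i} = p_i = p$, we have $\mathbb{E}[\epsilon_i \mid \{\mathbf{v}^i\}] = p$ both for the DPP and for the i.i.d.\ Bernoulli ensemble, so in each case
\[
\mathbb{E}\!\left[ \hat{\nabla}_U^{DPP} f_\sigma(\theta) \,\middle|\, \{\mathbf{v}^i\} \right] = \frac{1}{N\sigma}\sum_{i=1}^N f(\theta + \sigma \mathbf{v}^i)\mathbf{v}^i = \hat{\nabla} f_\sigma(\theta),
\]
and identically for the i.i.d.\ version. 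Taking expectations over $\{\mathbf{v}^i\} \sim \mathcal{N}(0, \mathbb{I}_d)$ and using that each summand $\frac{1}{\sigma} f(\theta + \sigma \mathbf{v}^i)\mathbf{v}^i$ has mean $\nabla f_\sigma(\theta)$ by definition of the ES gradient, the tower property gives $\mathbb{E}[\hat{\nabla}_U^{DPP} f_\sigma(\theta)] = \mathbb{E}[\hat{\nabla}_U f_\sigma(\theta)] = \nabla f_\sigma(\theta)$.

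For the variance reduction I would invoke the law of total variance, writing (componentwise, or equivalently for the trace of the covariance)
\[
\mathrm{Var}(\hat{\nabla}_U f_\sigma(\theta)) = \mathbb{E}_{\mathbf{v}}\!\left[\mathrm{Var}_\epsilon\!\left(\hat{\nabla}_U f_\sigma(\theta)\mid \{\mathbf{v}^i\}\right)\right] + \mathrm{Var}_{\mathbf{v}}\!\left(\mathbb{E}_\epsilon\!\left[\hat{\nabla}_U f_\sigma(\theta)\mid \{\mathbf{v}^i\}\right]\right),
\]
and likewise for the DPP estimator. Because the conditional means coincide (both equal $\hat{\nabla} f_\sigma(\theta)$), the second term is identical for the two estimators, so the full variance gap equals the expected conditional-variance gap. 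Theorem \ref{theorem::unbiased_dpp_variance_reduction_evolution_strategies} identifies this conditional gap, for the constructed kernel, with the nonnegative quantity $\langle (\mathbf{K}^0)^2, \mathbf{D}^\top \mathbf{V}^\top \mathbf{D}\mathbf{V}\rangle$ (term $I$ in its proof), which is strictly positive precisely when some off-diagonal product $f(\theta+\sigma\mathbf{v}^i) f(\theta+\sigma\mathbf{v}^k)\langle \mathbf{v}^i,\mathbf{v}^k\rangle$ is positive.

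The main obstacle is therefore upgrading ``nonnegative conditional gap'' to ``strictly positive full variance gap,'' which requires the conditional gap to be strictly positive on a set of sensings of positive probability. Here I would apply Lemma \ref{lemma::vectors_negative_correlations} to the reweighted vectors $\mathbf{w}^i = f(\theta + \sigma \mathbf{v}^i)\mathbf{v}^i$: almost surely the $\mathbf{v}^i$ are nonzero and $f(\theta+\sigma\mathbf{v}^i)\neq 0$, so the $\mathbf{w}^i$ form $N \geq d+2$ nonzero vectors in $\mathbb{R}^d$ and by the Lemma they cannot all be pairwise negatively correlated, forcing at least one off-diagonal entry $\langle \mathbf{w}^i,\mathbf{w}^k\rangle = f(\theta+\sigma\mathbf{v}^i) f(\theta+\sigma\mathbf{v}^k) \langle \mathbf{v}^i,\mathbf{v}^k\rangle > 0$. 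Hence term $I > 0$ almost surely, and since it is always nonnegative its expectation is strictly positive, giving $\mathrm{Var}(\hat{\nabla}_U^{DPP} f_\sigma(\theta)) < \mathrm{Var}(\hat{\nabla}_U f_\sigma(\theta))$. A minor remaining point is that the kernel $\mathbf{K}$ from the construction is data-dependent; one checks that $\epsilon > 0$ can be chosen small enough (uniformly over realizations, since $p < 1$ is fixed and the off-diagonal pattern is $0/\epsilon$-valued, via a Gershgorin-type bound) so that $0 \prec \mathbf{K} \prec \mathbb{I}_d$ for every realization, yielding a valid DPP and the measurable dependence needed for the expectation above.
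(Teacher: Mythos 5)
Your proposal is correct and follows essentially the same route as the paper, which simply states the corollary as the specialization of Theorem \ref{theorem::unbiased_dpp_variance_reduction} (via Theorem \ref{theorem::unbiased_dpp_variance_reduction_evolution_strategies}) to uniform probabilities $p_i = p$ and i.i.d.\ Gaussian sensings, without writing out a proof. Your added details (tower property for unbiasedness, law of total variance to transfer the conditional gap, and the almost-sure strict positivity of term $I$ via Lemma \ref{lemma::vectors_negative_correlations} applied to the reweighted sensings) are sound and in fact make explicit steps the paper leaves implicit, including the measurability/data-dependence of the constructed kernel.
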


This corollary implies that picking the right Kernel, subsampling perturbations from a DPP process when these perturbations are all i.i.d. Gaussian vectors, yields an unbiased estimator of the smoothed gradient $\nabla f_\sigma(\theta)$ with less variance (in this case equal to the mean squared error) than a naive subsampled gradient estimator that subsamples the $\{ \mathbf{v}^i \}$ perturbations each with probability $p$.

\textbf{Biased subsampling}

The goal of this section is to show that for any i.i.d. subsampling strategy to build a biased estimator for the ES gradient, there exists a DPP kernel such that the DPP unbiased subsampling estimator achieves less mean squared error (MSE) than the i.i.d. one. 

Define the biased downsampled estimator as:
\begin{equation}
    \hat{F}(\theta)_B = \frac{1}{N} \sum_{i = 1}^N \frac{\epsilon_i}{w_i}  h_\theta(\mathbf{v}^i).
\end{equation}

Theorem \ref{theorem::unbiased_dpp_variance_reduction} 
and Corollary \ref{corollary::unbiased_ES_variance_reduction} of the main text can be generalized to the case of biased estimators. 
Borrowing notation from the previous section, and assuming access to an ensemble $\{ w_i \}$ of importance weights, we get as a biased equivalent version of Theorem \ref{theorem::unbiased_dpp_variance_reduction}:



\begin{theorem}\label{theorem::biased_dpp_variance_reduction}
If $N \geq d+2$ and $p_i < 1$ there exists a Marginal Kernel $\mathbf{K} \in \mathbb{R}^{N\times N}$ such that:
\begin{equation*}
    \mathbb{E}_{\{ \epsilon_i \} \sim DPP(\mathbf{K})}\left[ \hat{F}(\theta)_B^{\mathrm{DPP}}   \right] =    
    \mathbb{E}_{\{ \epsilon_i \} \sim\{ Ber(p_i) \} }\left[ \hat{F}(\theta)^{\mathrm{iid}}_B   \right],
\end{equation*}
and furthermore $\mathrm{MSE}( \hat{F}(\theta)^{\mathrm{DPP}}_B  ) \leq    \mathrm{MSE}( \hat{F}(\theta)^{\mathrm{iid}}_B )$, 
where the comparison mean equals $\mu = \hat{F}(\theta) = \frac{1}{N} \sum_{i=1}^N  h_\theta( \mathbf{v}^i)$.
\end{theorem}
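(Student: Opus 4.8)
The plan is to mirror the unbiased argument of Theorem~\ref{theorem::auxiliary_dpp_variance_1}, replacing the marginal probabilities $p_i$ by the importance weights $w_i$ in the denominators of the estimator while keeping $\mathbf{K}_{i,i}=p_i$ fixed. First I would dispatch the mean equality. Because $\mathbb{E}[\epsilon_i]=\mathbf{K}_{i,i}=p_i$ holds identically for the DPP ensemble and for independent Bernoullis, the expectation $\mathbb{E}[\hat F(\theta)_B]=\frac{1}{N}\sum_i\frac{p_i}{w_i}h_\theta(\mathbf{v}^i)$ depends only on the first-order marginals and hence coincides for the two sampling schemes, giving the first displayed identity at once. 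Note that this common mean is generally distinct from $\mu=\frac{1}{N}\sum_i h_\theta(\mathbf{v}^i)$, which is exactly why the estimator is biased and why the statement is phrased in terms of $\mathrm{MSE}$ relative to $\mu$ rather than variance.

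The key reduction is that sharing a mean collapses the $\mathrm{MSE}$ comparison to a second-moment comparison. Expanding $\mathrm{MSE}(\hat F(\theta)_B)=\mathbb{E}\|\hat F(\theta)_B-\mu\|^2=\mathbb{E}\|\hat F(\theta)_B\|^2-2\langle\mathbb{E}[\hat F(\theta)_B],\mu\rangle+\|\mu\|^2$, the last two terms are identical for the DPP and iid versions, so only $\mathbb{E}\|\hat F(\theta)_B\|^2$ can differ. I would then compute this second moment using $\mathbb{E}[\epsilon_i^2]=p_i$ on the diagonal and $\mathbb{E}[\epsilon_i\epsilon_k]=p_ip_k-\mathbf{K}_{i,k}^2$ (DPP) versus $p_ip_k$ (iid) off the diagonal; the diagonal contributions cancel and I obtain
\begin{equation*}
\mathbb{E}\big[\|\hat F(\theta)^{\mathrm{DPP}}_B\|^2\big]=\mathbb{E}\big[\|\hat F(\theta)^{\mathrm{iid}}_B\|^2\big]-\frac{1}{N^2}\sum_{i\neq k}\frac{\mathbf{K}_{i,k}^2}{w_iw_k}\langle h_\theta(\mathbf{v}^i),h_\theta(\mathbf{v}^k)\rangle .
\end{equation*}
Thus $\mathrm{MSE}(\hat F(\theta)^{\mathrm{DPP}}_B)\leq\mathrm{MSE}(\hat F(\theta)^{\mathrm{iid}}_B)$ as soon as the correction sum is nonnegative, and since the $w_i>0$ do not affect any signs this is structurally the same quantity as term $I$ in the proof of Theorem~\ref{theorem::unbiased_dpp_variance_reduction_evolution_strategies}.

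To force the correction term to be nonnegative I would reuse the construction from Lemma~\ref{lemma::variance_reduction_lemma_2}: set the off-diagonal entries $\mathbf{K}^0_{i,k}=\epsilon$ whenever $\langle h_\theta(\mathbf{v}^i),h_\theta(\mathbf{v}^k)\rangle>0$ and $0$ otherwise, leaving $\mathbf{K}_{i,i}=p_i$. By Lemma~\ref{lemma::vectors_negative_correlations}, the hypothesis $N\geq d+2$ precludes the vectors $h_\theta(\mathbf{v}^i)$ from being pairwise negatively correlated, so at least one off-diagonal inner product is nonnegative and the sum is $\geq 0$ (strictly positive when some pair is strictly positive, recovering the sharper conclusion of the unbiased case). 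The only genuinely delicate step is admissibility: I must certify that $\mathbf{K}$ is a legitimate marginal kernel, i.e. $0\preceq\mathbf{K}\preceq\mathbf{I}_N$. Since $p_i<1$ yields the strict containment $0\prec\mathrm{diag}(p_i)\prec\mathbf{I}_N$ and the spectrum of $\mathbf{K}$ depends continuously on the $\epsilon$-scaled off-diagonal perturbation, choosing $\epsilon>0$ small keeps $\mathbf{K}$ inside the admissible cone, exactly as in Lemma~\ref{lemma::variance_reduction_lemma_2} and Theorem~\ref{theorem::unbiased_dpp_variance_reduction_evolution_strategies}. I expect this eigenvalue-containment bookkeeping, rather than the moment algebra, to be the main obstacle.
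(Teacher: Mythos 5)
Your proposal is correct and follows essentially the same route as the paper: the paper writes $\mathrm{MSE}=\mathrm{Var}+\|\text{bias}\|^2$, notes the bias terms agree because the two estimators share the same mean, and invokes the variance-reduction kernel from Theorem~\ref{theorem::unbiased_dpp_variance_reduction}; your expansion of $\mathbb{E}\|\hat F-\mu\|^2$ and cancellation of the cross terms is the same reduction, and your explicit kernel construction simply unrolls the argument the paper cites. The only cosmetic difference is that you re-derive the second-moment identity and admissibility step in place rather than deferring to the unbiased theorem.
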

\begin{proof}
The following equalities hold:
\begin{align*}
     \mathrm{MSE}( \hat{F}(\theta)^{\mathrm{DPP}}_B  ) &=  \mathrm{Var}( \hat{F}(\theta)^{\mathrm{DPP}}_B  ) +\\
     &\quad\left\| \mathbb{E}\left[  \hat{F}(\theta)^{\mathrm{DPP}}_B- \hat{F}(\theta) \right]\right\|^2 \\
        \mathrm{MSE}( \hat{F}(\theta)^{\mathrm{iid}}_B  ) &=  \mathrm{Var}( \hat{F}(\theta)^{\mathrm{iid}}_B  ) +\\
     &\quad\left\| \mathbb{E}\left[  \hat{F}(\theta)^{\mathrm{iid}}_B- \hat{F}(\theta) \right]\right\|^2 
\end{align*}

Since the expectations of $\hat{F}(\theta)^{\mathrm{iid}}_B$ and $\hat{F}(\theta)_B^{\mathrm{DPP}}$ agree, and as a consequence of 
Theorem \ref{theorem::unbiased_dpp_variance_reduction}, we can produce a kernel $\mathbf{K}$ such that:
\begin{equation*}
  \mathrm{Var}( \hat{F}(\theta)^{\mathrm{DPP}}_B  ) <  \mathrm{Var}( \hat{F}(\theta)^{\mathrm{iid}}_B), 
\end{equation*}
the result follows.
\end{proof}

As a consequence of Theorem \ref{theorem::biased_dpp_variance_reduction}, the biased downsampled versions $\hat{\nabla}_B^{\mathrm{iid}} f_\sigma(\theta)$ 
and $\hat{\nabla}_B^{\mathrm{DPP}} f_\sigma(\theta)$ of the ES gradient estimator $\nabla f_\sigma(\theta)$ 
satisfy an analogous version of Corollary \ref{corollary::unbiased_ES_variance_reduction} where $\mathrm{Var}$ is substituted by $\mathrm{MSE}$.

The proofs of Theorem \ref{theorem::auxiliary_dpp_variance_1} and \ref{theorem::biased_dpp_variance_reduction} can be used 
to produce an algorithm to find kernel matrix $\mathbf{K}$ reducing MSE. The results of the previous section can be extended to the case of biased sampling estimators. These result from the case when the importance weights are different from $p_i$. 

Similarly to the previous section, the following theorem holds. Defining $\hat{\nabla}_{\mathrm{B}} f_\sigma(\theta)$ and $\hat{\nabla}^{\mathrm{DPP}}_{\mathrm{B}} f_\sigma(\theta)$ as:

\begin{enumerate}
    \item $\hat{\nabla}_{\mathrm{B}} f_\sigma(\theta) = \frac{1}{\sigma N} \sum_{i=1}^N  \frac{\epsilon_i}{w_i} \mathbf{v}^i f(\theta + \sigma \mathbf{v}^i)$ where $\{ w_i \}_{i=1}^N$ is a set of importance weights and $\epsilon_i \sim \mathrm{Ber}(p_i)$ for some probabilities ensemble $\{ p_i \}$\\
    \item $\hat{\nabla}_{\mathrm{B}}^{\mathrm{DPP}} f_\sigma(\theta) = \frac{1}{N\sigma} \sum_{i \in \mathcal{S}} \frac{\epsilon}{w_i} f(\theta + \sigma \mathbf{v}^i) \mathbf{v}^i$.
\end{enumerate}

In this case, the corresponding version of Theorem \ref{theorem::auxiliary_dpp_variance_1} is:
 
\begin{theorem}
There exists a marginal kernel $\mathbf{K} \in \mathbb{R}^{N \times N}$ such that $\widehat{\mathrm{MSE}}(\hat{\nabla}^{\mathrm{DPP}}_{\mathrm{B}} f_\sigma(\theta)) < \widehat{\mathrm{MSE}}( \hat{\nabla}_{\mathrm{B}} f_\sigma(\theta)   ) $.
\end{theorem}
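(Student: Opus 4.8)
The plan is to mirror the decomposition used in the proof of Theorem~\ref{theorem::biased_dpp_variance_reduction} and then reduce the biased MSE comparison to the variance computation already carried out for the unbiased estimator in Theorem~\ref{theorem::unbiased_dpp_variance_reduction_evolution_strategies}. The first step is to set $\mathbf{K}_{i,i} = p_i$ for every $i$, so that $\mathbb{E}[\epsilon_i] = \mathbf{K}_{i,i} = p_i$ under the DPP exactly as under the independent Bernoulli ensemble. Because both $\hat{\nabla}_{\mathrm{B}} f_\sigma(\theta)$ and $\hat{\nabla}_{\mathrm{B}}^{\mathrm{DPP}} f_\sigma(\theta)$ weight the $i$-th sensing by the same importance weight $1/w_i$ and depend on the randomness only through $\epsilon_i$, this choice forces the two estimators to share the common mean $\frac{1}{N\sigma}\sum_{i} \frac{p_i}{w_i} f(\theta + \sigma \mathbf{v}^i)\mathbf{v}^i$. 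Writing $\widehat{\mathrm{MSE}} = \mathrm{Var} + \|\mathrm{bias}\|^2$ against the target, the two bias terms then coincide and the desired inequality reduces to showing $\mathrm{Var}(\hat{\nabla}_{\mathrm{B}}^{\mathrm{DPP}} f_\sigma(\theta)) < \mathrm{Var}(\hat{\nabla}_{\mathrm{B}} f_\sigma(\theta))$.

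Second, I would repeat the second-moment expansion from the proof of Theorem~\ref{theorem::unbiased_dpp_variance_reduction_evolution_strategies} essentially verbatim, the only change being that each factor $1/p_i$ in the estimator is replaced by $1/w_i$. Since the diagonal contribution ($i = k$) is unaffected by the off-diagonal entries of $\mathbf{K}$ (it depends only on $\mathbb{E}[\epsilon_i^2] = \mathbb{E}[\epsilon_i] = p_i$, which is identical for both ensembles), the entire difference of second moments collapses to a single off-diagonal term,
\begin{equation*}
    \mathbb{E}\left[\|\hat{\nabla}_{\mathrm{B}} f_\sigma(\theta)\|^2\right] - \mathbb{E}\left[\|\hat{\nabla}_{\mathrm{B}}^{\mathrm{DPP}} f_\sigma(\theta)\|^2\right] = \frac{1}{\sigma^2 N^2}\sum_{i \neq k} \frac{\mathbf{K}_{i,k}^2}{w_i w_k}\, f(\theta + \sigma \mathbf{v}^i) f(\theta + \sigma \mathbf{v}^k)\,\langle \mathbf{v}^i, \mathbf{v}^k \rangle,
\end{equation*}
so it suffices to exhibit a valid marginal kernel making this quantity strictly positive.

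Third, I would construct $\mathbf{K}$ exactly as in Lemma~\ref{lemma::variance_reduction_lemma_2} and the proof of Theorem~\ref{theorem::auxiliary_dpp_variance_1}: fix the diagonal to $p_i$ and set the off-diagonal entry $\mathbf{K}_{i,k}$ to a small $\epsilon > 0$ precisely on those pairs with $f(\theta + \sigma \mathbf{v}^i) f(\theta + \sigma \mathbf{v}^k)\langle \mathbf{v}^i, \mathbf{v}^k\rangle > 0$ and to $0$ otherwise. Every surviving summand is then nonnegative, and to secure a strictly positive contribution I would invoke Lemma~\ref{lemma::vectors_negative_correlations} applied to the weighted sensings $\mathbf{u}^i = f(\theta + \sigma \mathbf{v}^i)\mathbf{v}^i$: since $N \geq d+2$ exceeds the maximal size $d+1$ of a pairwise negatively correlated family, not all pairs $\langle \mathbf{u}^i, \mathbf{u}^k\rangle$ can be strictly negative, which is the condition picked out by the construction. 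Finally, since $0 \prec \mathrm{diag}(p_i) \prec \mathbf{I}$ whenever all $p_i \in (0,1)$, choosing $\epsilon$ sufficiently small keeps $0 \prec \mathbf{K} \prec \mathbf{I}$, so $\mathbf{K}$ is a legitimate marginal kernel and the result follows.

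The main obstacle I anticipate is the \emph{strictness} of the inequality rather than the mere comparison. Lemma~\ref{lemma::vectors_negative_correlations} only forbids all pairwise inner products from being strictly negative, so a priori some selected pair could satisfy $\langle \mathbf{u}^i, \mathbf{u}^k\rangle = 0$, leaving the off-diagonal sum nonnegative but possibly zero. To force a genuinely positive term I would either strengthen the counting argument to rule out a family in which every pair is nonpositively correlated, or appeal to the genericity that holds almost surely when the $\mathbf{v}^i$ are drawn from $\mathcal{N}(0,\mathbf{I}_d)$ and $f$ does not vanish on the perturbed points, so that exact orthogonality occurs with probability zero. Handling this boundary case cleanly is the only delicate point; everything else is a notational relabeling of the argument already established for the unbiased estimator.
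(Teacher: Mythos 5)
Your proposal matches the paper's own proof of this theorem: the paper likewise decomposes $\widehat{\mathrm{MSE}} = \mathrm{Var} + \|\mathrm{bias}\|^2$, notes the bias terms coincide because the two estimators have the same mean, and then declares the remainder identical to the unbiased variance-reduction argument (Theorem \ref{theorem::unbiased_dpp_variance_reduction_evolution_strategies}) with $1/p_i$ replaced by $1/w_i$, exactly as you lay out. The strictness issue you flag --- that Lemma \ref{lemma::vectors_negative_correlations} only forbids all pairs from being \emph{strictly} negatively correlated, so the surviving pair could in principle have inner product exactly zero --- is a genuine subtlety that the paper's own proof also glosses over, and your almost-sure genericity fix for Gaussian sensings is a reasonable way to close it.
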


\begin{proof}
The mean squared errors $\widehat{\mathrm{MSE}}(\hat{\nabla}^{\mathrm{DPP}}_{\mathrm{B}} f_\sigma(\theta))$ and $\widehat{\mathrm{MSE}}( \hat{\nabla}_{\mathrm{B}} f_\sigma(\theta) ) $ can be written as:
\begin{align*}
    \widehat{\mathrm{MSE}}(\hat{\nabla}^{\mathrm{DPP}}_{\mathrm{B}} f_\sigma(\theta)) &= \mathrm{Var}( \hat{\nabla}^{\mathrm{DPP}}_{\mathrm{B}} f_\sigma(\theta)) + \underbrace{ \left\| \mathbb{E}\left[ \hat{\nabla}^{\mathrm{DPP}}_{\mathrm{B}} f_\sigma(\theta))  \right] -  \nabla f_\sigma(\theta) \right\|^2}_{I} \\
    \widehat{\mathrm{MSE}}(\hat{\nabla}_{\mathrm{B}} f_\sigma(\theta)) &= \mathrm{Var}( \hat{\nabla}_{\mathrm{B}} f_\sigma(\theta)) + \underbrace{ \left\| \mathbb{E}\left[ \hat{\nabla}_{\mathrm{\mathrm{B}}} f_\sigma(\theta))  \right] -  \nabla f_\sigma(\theta) \right\|^2}_{II} 
\end{align*}

The bias terms $I$ and $II$ are always equal since $\mathbb{E}\left[ \hat{\nabla}_{\mathrm{B}} f_\sigma(\theta))  \right] = \mathbb{E}\left[ \hat{\nabla}^{\mathrm{DPP}}_{\mathrm{B}} f_\sigma(\theta))  \right]$.

The remainder of the proof is exactly the same as in Theorem \ref{theorem::unbiased_dpp_variance_reduction}. 

\end{proof}

\subsection{DPP Connections with orthogonality}

In this section we flesh out some connections between structured sampling via DPPs and structured sampling via orthogonal directions such as in \cite{montreal}. We show that in some way DPP structured sampling subsumes orthogonal sampling. We start showing Lemma \ref{lemma::covariance_kernel_equivalence}, leading to Theorem \ref{theorem::orthogonal_connection1}, (Theorem \ref{theorem::orthogonal_connections} in the main text).

In what follows assume $\mathcal{X} = \{ \mathbf{x}^1, \cdots, \mathbf{x}^N\}$ with $\mathbf{x}^i \in \mathbb{R}^d$ and let $\phi : \mathbb{R}^d  \rightarrow \mathbb{R}^D$ be a possibly infinite feature map $\phi$.

\begin{lemma}\label{lemma::covariance_kernel_equivalence}
Let $\mathbf{W} \in \mathbb{R}^{N\times N}$ such that $\mathbf{W}_{i,j} = \langle \phi(\mathbf{x}^i) , \phi(\mathbf{x}^j) \rangle$ for some a $D-$dimensional feature map $\phi$. Let $A \subseteq [N]$. The nonzero eigenvalues of the principal minor $W_A$ equal the nonzero eigenvalues of $\sum_{i \in A} \phi(\mathbf{x}^i)\phi^\top(\mathbf{x}^i)$.
\end{lemma}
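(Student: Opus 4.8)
The plan is to recognize the claimed identity as the classical ``Gram vs.\ covariance'' spectral equivalence, i.e.\ the fact that $M^\top M$ and $MM^\top$ share their nonzero spectrum. First I would collect the feature vectors indexed by $A$ into a single matrix. Let $A = \{i_1, \dots, i_k\}$ and define $\Phi_A \in \mathbb{R}^{D \times k}$ to be the matrix whose columns are $\phi(\mathbf{x}^{i_1}), \dots, \phi(\mathbf{x}^{i_k})$. Two bookkeeping observations then reframe the whole lemma: the $(j,l)$ entry of $\Phi_A^\top \Phi_A$ is $\langle \phi(\mathbf{x}^{i_j}), \phi(\mathbf{x}^{i_l}) \rangle$, so $\mathbf{W}_A = \Phi_A^\top \Phi_A$, while $\Phi_A \Phi_A^\top = \sum_{i \in A} \phi(\mathbf{x}^i)\phi^\top(\mathbf{x}^i)$ by the outer-product expansion of matrix multiplication. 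Thus the lemma asserts exactly that $\Phi_A^\top \Phi_A$ and $\Phi_A \Phi_A^\top$ have the same nonzero eigenvalues.

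The core step is the eigenvector correspondence. Suppose $\mathbf{v}$ is an eigenvector of $\Phi_A^\top \Phi_A$ with eigenvalue $\lambda \neq 0$. Applying $\Phi_A$ on the left gives $\Phi_A \Phi_A^\top (\Phi_A \mathbf{v}) = \lambda (\Phi_A \mathbf{v})$, so $\Phi_A \mathbf{v}$ is an eigenvector of $\Phi_A \Phi_A^\top$ with the same eigenvalue, provided $\Phi_A \mathbf{v} \neq 0$; and indeed $\Phi_A \mathbf{v} = 0$ would force $\Phi_A^\top \Phi_A \mathbf{v} = 0$, contradicting $\lambda \neq 0$. Symmetrically, left-multiplication by $\Phi_A^\top$ sends nonzero-eigenvalue eigenvectors of $\Phi_A \Phi_A^\top$ back to those of $\Phi_A^\top \Phi_A$. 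To upgrade this from ``same eigenvalues'' to ``same multiplicities'' I would verify that $\mathbf{v} \mapsto \Phi_A \mathbf{v}$ restricts to a linear isomorphism between the $\lambda$-eigenspace of $\Phi_A^\top \Phi_A$ and the $\lambda$-eigenspace of $\Phi_A \Phi_A^\top$, with inverse $\tfrac{1}{\lambda}\Phi_A^\top$; injectivity on each eigenspace follows from the same nonvanishing argument, which yields equality of geometric multiplicities (and hence of the full nonzero spectra, both matrices being symmetric/self-adjoint and thus diagonalizable).

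The only point requiring care is the phrase ``possibly infinite feature map'' $\phi : \mathbb{R}^d \to \mathbb{R}^D$ with $D = \infty$, since then $\Phi_A \Phi_A^\top$ is an operator rather than a finite matrix. I expect this to be the mildest of obstacles: because $A$ is finite, $\sum_{i \in A} \phi(\mathbf{x}^i)\phi^\top(\mathbf{x}^i)$ has rank at most $k = |A|$, so it is a finite-rank self-adjoint operator whose nonzero spectrum is a finite list of real numbers, and the eigenvector correspondence above is purely algebraic and goes through verbatim with $\Phi_A$ interpreted as a bounded operator $\mathbb{R}^k \to \mathbb{R}^D$. Consequently no genuine functional-analytic machinery is needed, and the finite-rank reduction is what I would flag as the technical crux to state cleanly rather than a real difficulty.
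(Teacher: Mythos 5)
Your proof is correct and follows essentially the same route as the paper's: both reduce the lemma to writing $\mathbf{W}_A = \mathbf{B}_A^\top \mathbf{B}_A$ and $\sum_{i \in A}\phi(\mathbf{x}^i)\phi^\top(\mathbf{x}^i) = \mathbf{B}_A\mathbf{B}_A^\top$ for the matrix $\mathbf{B}_A$ of feature columns, and then invoke the fact that $M^\top M$ and $MM^\top$ share their nonzero spectrum. The only difference is in how that last fact is justified --- the paper uses an SVD of $\mathbf{B}_A$, while you use the eigenvector-intertwining map $\mathbf{v} \mapsto \Phi_A\mathbf{v}$, which is equally valid and has the minor advantage of extending verbatim to the infinite-dimensional feature maps the surrounding text allows.
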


\begin{proof}
Let $A = \{  i_1, \cdots, i_{|A|} \}$ and define $B_A = \left[ \phi(\mathbf{x}^{i_1}) \cdots \phi(\mathbf{x}^{|A|})   \right] \in \mathbb{R}^{D \times |A|}$. It follows immediately that:

\begin{equation*}
    \mathbf{W}_A = \mathbf{B}_A^\top \mathbf{B}_A
\end{equation*}

Assume the SVD decomposition of $\mathbf{B}_A = \mathbf{U}_A^\top \mathbf{D}_A \mathbf{V}_A$ with $\mathbf{U}_A \in \mathbb{R}^{ D\times D} $, $\mathbf{D}_A \in \mathbb{R}^{D \times |A|} $, and $\mathbf{V}_A \in \mathbb{R}^{|A| \times |A|}$. And thus:

\begin{equation*}
    \mathbf{W}_A = \mathbf{V}_A^\top \underbrace{\mathbf{D}_A \mathbf{D}_A^\top}_{I} \mathbf{V}_A
\end{equation*}

Observe that:

\begin{equation*}
    \sum_{i \in A} \phi(\mathbf{x}^i) \phi^\top(\mathbf{x}^i) = \mathbf{B}_A \mathbf{B}_A^\top 
\end{equation*}

And substituting the SVD decomposition of $B_A$ yields:

\begin{equation*}
    \sum_{i\in A} \phi(\mathbf{x}^i) \phi^\top (\mathbf{x}^i) = \mathbf{U}_A^\top \underbrace{\mathbf{D}_A^\top \mathbf{D}_A}_{II} \mathbf{U}_A
\end{equation*}
Since the nonzero entries of $I$ and $II$ are the same, we conclude the nonzero eigenvalues of $\mathbf{W}_A$ and  of 
$\sum_{i \in A} \phi(\mathbf{x}^i ) \phi^\top(\mathbf{x}^i)$ coincide.
\end{proof}

We now show a relationship between orthogonality and DPPs.

\begin{theorem}\label{theorem::orthogonal_connection1}
Let $\mathbf{L} \in \mathbb{R}^{N \times N} $ be an $\mathbf{L}-$ensemble such that $\mathbf{L}_{i,j} = \langle \phi(\mathbf{x}^i), \phi(\mathbf{x}^j) \rangle$, where $\| \Phi(\mathbf{x}^i)\| = 1$ for all $i \in [N]$. Let $k \in \mathbb{N}$ with $k \leq N$ and assume there exist $k$ samples $\mathbf{x}^{i_1}, \cdots, \mathbf{x}^{i_k}$ in $\mathcal{X}$ satisfying $\langle \phi(\mathbf{x}^{i_j}) , \phi(\mathbf{x}^{i_l})\rangle = 0$ for all $j, l \in [k]$. If $\mathbb{P}_k$ denotes the DPP measure over subsets of size $k$ of $[N]$ defined by $\mathbf{L}$, the most likely outcomes from $\mathbb{P}_k$ are the size $k$ pairwise orthogonal subsets of $\mathcal{X}$.
\end{theorem}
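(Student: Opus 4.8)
The plan is to reduce the statement to a determinant-maximization problem and then invoke Lemma~\ref{lemma::covariance_kernel_equivalence} together with the AM--GM inequality. For a $k$-DPP built from the $\mathbf{L}$-ensemble, the probability assigned to a size-$k$ subset $S \subseteq [N]$ is $\mathbb{P}_k(S) = \det(\mathbf{L}_S) / \sum_{|S'|=k}\det(\mathbf{L}_{S'})$, where the denominator is independent of $S$. Hence the most likely outcomes under $\mathbb{P}_k$ are exactly the subsets $S$ that maximize $\det(\mathbf{L}_S)$, and it suffices to show that $\det(\mathbf{L}_S) \le 1$ for every size-$k$ subset, with equality precisely when the feature vectors $\{\phi(\mathbf{x}^i)\}_{i\in S}$ are pairwise orthogonal.

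Next I would analyze $\det(\mathbf{L}_S)$ through the spectrum of the feature outer-product matrix. By Lemma~\ref{lemma::covariance_kernel_equivalence}, the nonzero eigenvalues of $\mathbf{L}_S$ coincide with those of $M_S := \sum_{i\in S}\phi(\mathbf{x}^i)\phi^\top(\mathbf{x}^i)$. Since $\mathbf{L}_S$ is $k\times k$, its determinant is the product of all $k$ of its eigenvalues; if $\{\phi(\mathbf{x}^i)\}_{i\in S}$ are linearly dependent this product is $0$, so I may treat rank-deficient subsets separately and restrict attention to the full-rank case, where $\det(\mathbf{L}_S) = \prod_{j=1}^k \lambda_j$ with $\lambda_1,\dots,\lambda_k > 0$ the nonzero eigenvalues of $M_S$. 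The unit-norm hypothesis supplies the crucial fixed-trace constraint $\sum_{j=1}^k \lambda_j = \mathrm{tr}(M_S) = \sum_{i\in S}\|\phi(\mathbf{x}^i)\|^2 = k$.

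Applying AM--GM under this constraint yields $\det(\mathbf{L}_S) = \prod_{j=1}^k \lambda_j \le \big(\tfrac{1}{k}\sum_j \lambda_j\big)^k = 1$, with equality iff all $\lambda_j = 1$, i.e. iff $M_S$ acts as the identity on its $k$-dimensional range. Because the $\phi(\mathbf{x}^i)$ are unit vectors, this is equivalent to $\{\phi(\mathbf{x}^i)\}_{i\in S}$ being orthonormal, hence pairwise orthogonal. The hypothesis that some size-$k$ pairwise orthogonal family exists guarantees the bound $1$ is actually attained, so the set of maximizers is nonempty and consists exactly of the pairwise orthogonal size-$k$ subsets, establishing the claim. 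The main obstacle is the equality analysis: I must argue carefully that the fixed-trace AM--GM equality condition translates back into orthonormality of the feature vectors, and that rank-deficient subsets strictly lose, rather than merely deriving the inequality. Lemma~\ref{lemma::covariance_kernel_equivalence} is precisely what makes this translation between the Gram determinant and the spectrum of $M_S$ clean, so its role is central to closing this step.
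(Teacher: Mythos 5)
Your proposal is correct and follows essentially the same route as the paper: reduce to maximizing $\det(\mathbf{L}_S)$, note $\mathrm{tr}(\mathbf{L}_S)=k$ from the unit-norm hypothesis, apply AM--GM to get $\det(\mathbf{L}_S)\le 1$ with equality iff all eigenvalues equal $1$, and use Lemma~\ref{lemma::covariance_kernel_equivalence} to translate the equality case into pairwise orthogonality of the feature vectors (the paper closes that last step by expanding $\phi^\top(\mathbf{x}^i)\Sigma\,\phi(\mathbf{x}^i)=\sum_{j}\langle\phi(\mathbf{x}^j),\phi(\mathbf{x}^i)\rangle^2\le 1$, which is exactly the computation your "identity on its range" claim requires). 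The only differences are cosmetic: you run AM--GM on the spectrum of $M_S$ rather than directly on $\mathbf{L}_S$, and you handle rank-deficient subsets explicitly, which the paper leaves implicit.
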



\begin{proof}
Recall that $\mathbb{P}_k\propto \det( \mathbf{L}_A)$. Observe also that, since all eigenvalues of $\mathbf{L}_A$ are nonnegative, if we assume the determinant of $\mathbf{L}_A$ to be nonnegative, by the arithmetic-geometric inequality:

\begin{equation}\label{equation::inequality_det_trace}
    \left( \det( \mathbf{L}_A )\right)^{1/k} \leq \frac{\mathrm{tr}( \mathbf{L}_A)} {k} = 1
\end{equation}

Since the determinant equals the product of the eigenvalues while the trace is the sum. Equality holds iff all of the eigenvalues are equal to $1$. Let $A$ be a subset of size $k$ such that all points are pairwise orthogonal after the map $\Phi$, then $\det(\mathbf{L}_A = 1$. Furthermore, if $\det( \mathbf{L}_A  ) = 1$, then the set of points $\{ \phi(\mathbf{x}^i) \}_{i \in A}$ must be orthogonal.

As a consequence of inequality \ref{equation::inequality_det_trace}, the equality $\det( \mathbf{L}_A ) = t^k$ can only hold if all eigenvalues of $\mathbf{L}_A$ equal $1$. We show this implies all the vectors must be orthogonal. 

Let $A = \{ i_1, \cdots, i_{|A|}\}$ and write $L_A^{(t)} = \left( \mathbf{B}_A \right)^\top \mathbf{B}_A $ where $B_{A} = \left[  \phi(\mathbf{x}^{i_1}) \cdots \phi(\mathbf{x}^{i_{|A|}})\right]$. As a consequence of Lemma \ref{lemma::covariance_kernel_equivalence}, the nonzero eigenvalues of $L_A^{(t)}$agree with the nonzero eigenvalues of $\Sigma = \sum_{i \in A}  \phi(\mathbf{x}^i)\phi^\top(\mathbf{x}^i)$.

Since by assumption $\| \Sigma\| = t$, and $\| \phi(x_i) \| =1$ for all $i$:

\begin{equation*}
    \phi^\top(\mathbf{x}^i) \Sigma \phi(\mathbf{x}^i) \leq 1
\end{equation*}

Expanding this equation by substituting the value of $\Sigma$, we get:
$
\phi^\top(\mathbf{x}^i)\Sigma \phi(\mathbf{x}^i) = \sum_{j \in A}  \langle \Phi(x_j), \Phi(x_i) \rangle^2 \leq 1
$

Since the term corresponding to $j = i$ already equals $1$, the remaining terms must be zero. This finishes the proof.

This result implies that the subsets of points of size $k$ with the largest mass are those corresponding to pairwise orthogonal ensembles. This finishes the proof.

\end{proof}

\section{Experiment Details}

\subsection{Code}
\label{code}

Here we include some simple code to implement DPPMC using python 3.x.

\begin{python}

import numpy as np
from pydpp.dpp import DPP

d = 10 # this will be the dimensionality of your problem
rho = 5 # this is a hyper-parameter
cov = np.eye(d) # this will be your nonisotropic covariance matrix
mu = np.repeat(0, d)
A = np.random.multivariate_normal(mu, cov, d * rho)

dpp = DPP(A)
dpp.compute_kernel(kernel_type = 'rbf')
idx = dpp.sample_k(d) # returning to original dimensionality, optional
A = A[idx]

# we now evaluate these samples.
\end{python}

This code is simple to include in any setting where samples are drawn from a nonisotropic distribution.

\subsection{Optimal Choice of $\rho$}
\label{ro}

Here we demonstrate the impact of $\rho$ by performing an ablation study using the $\mathrm{CMA}$-$\mathrm{ES}$ experiments. In order to measure the importance of this parameter, we test the following values: $\rho = 2,5,10,20$, and measure the mean performance across three seeds after $100$ function evaluations.

As we can see in Figure \ref{ablationrho}, in most cases an increase in $\rho$ leads to a monotonic improvement in performance. This however comes at an increase in computational cost, and as such it is important to consider the trade-off between the cost of evaluating the function vs. the $\mathrm{DPPMC}$ algorithm when choosing an optimal $\rho$ for a given problem. In our experiments we choose $\rho=10$ since this value is sufficient to achieve meaningful performance gains, demonstrating the effectiveness of our approach.

\subsection{Reinforcement Learning Experiments}
We provide details on the reinforcement learning experiments as follows.
\paragraph{Benchmark Environments.} Reinforcement learning tasks are identified by a state space $\mathcal{S}$ and an action space $\mathcal{A}$. The benchmark environments consist of HalfCheetah-v2 ($|\mathcal{S}| = 17, |\mathcal{A}|=6$), Swimmer-v2 ($|\mathcal{S}| = 8, |\mathcal{A}|=2$), Reacher-v2 ($|\mathcal{S}| = 11, |\mathcal{A}|=2$) and Walker2d-v2 ($|\mathcal{S}| = 17, |\mathcal{A}|=6$). Each task takes the sensory inputs of the robot as states $s_t \in \mathcal{S}$ and motor/position controls as actions $a_t \in \mathcal{A}$. All environments are simulated via OpenAI gym \citep{brockman2016openai}.

\paragraph{Policy Architecture.} We encode the policy $\pi_\theta: \mathcal{S} \mapsto  \mathcal{A}$ with feed-forward network parameter $\theta$. The architecture varies across tasks: for Swimmer-v2 and Reacher-v2, we have two hidden layers each with $16$ units; for HalfCheetah-v2 and Walker2d-v2, we have two hidden layers each with $32$ units. Each hidden layer is combined with a $\text{tanh}$ non-linear function activation. The output layer does not have non-linear function activation. For each hidden layer, instead of a fully-connected structure, we adopt a low displacement rank neural network \citep{choromanski_stockholm} for a compact representation.

\paragraph{Implementations and Common Hyper-parameters.} All ES algorithms are implemented with Numpy \citep{van2011numpy}. To make our implementations parallelizable, we have made heavy reference to the Ray open source project \citep{moritz2018ray}. At each iteration, the ES algorithms (including Guided ES, Trust Region ES and CMA-ES) all require sampling $m$ perturbation directions for function evaluations. We set $m$ to be the dimension $d$ of the policy parameter $\theta$. Gradient based optimizations are all carried out using Adam Optimizer \citep{kingma2014adam} with best learning rates chosen from $\alpha \in \{0.5,0.1,0.05,0.01\}$. 

\paragraph{DPPMC Hyper-parameters.} We use a fixed RBF-kernel for all experiments: recall that a RBF-kernel takes the form $K(\mathbf{x},\mathbf{y}) = \exp(\frac{-|\mathbf{x} - \mathbf{y}|^2}{2\sigma^2})$, we set $\sigma = 0.5$. The kernel parameter $\sigma$ is manually set such that the DPPMC variants achieve good performance while the computations remain numerically stable.

\paragraph{Hyper-parameters for Guided ES.} We follow the recipe of Guided ES \citep{metz} to set up hyper-parameters. The DPPMC variant requires constructing a sample pool of size $\rho m$, we choose $\rho = 10$ for our experiments. The Guided ES achieves performance gains over vanilla ES by constructing non-isotropic distribution for gradient sensing, which allows for exploring subspaces where the true gradients lie. We further improve upon Guided ES with significant gains in sample efficiency.

\paragraph{Hyper-parameters for Trust Region ES.} We follow the recipe of Trust Region ES \citep{rbo} to set up hyper-parameters. Trust Region ES has two variants: (1) using ridge regression to compute update directions (Ridge); (2) using Monte-Carlo samples to estimate update directions (MC). Both variants require re-using $\delta m$ samples and function evaluations from the previous iteration, here we set $\delta = 0.2$ so that the algorithm achieves $\approx 20\%$ sample gains. On top of Trust Region ES, the DPPMC variant further improves sample efficiency as demonstrated in the main paper. We refer readers to \citep{rbo} for a detailed description of the algorithm.

\begin{figure}[H]
\begin{minipage}{0.99\textwidth}
	\centering \subfigure[$\mathrm{Cigar}$]{\includegraphics[keepaspectratio, width=0.35\textwidth]{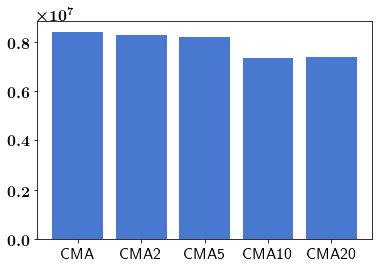}}
	\centering \subfigure[$\mathrm{Rastrigin}$]{\includegraphics[keepaspectratio, width=0.35\textwidth]{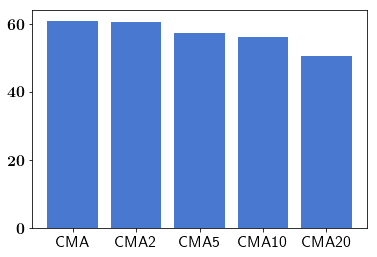}}  
\end{minipage}
\begin{minipage}{0.99\textwidth}
	\centering \subfigure[$\mathrm{Rosenbrock}$]{\includegraphics[keepaspectratio, width=0.35\textwidth]{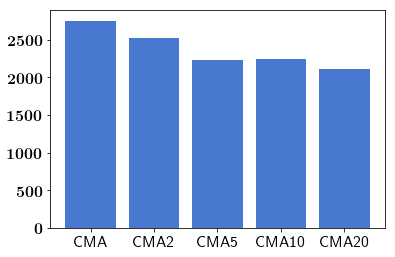}}
	\centering \subfigure[$\mathrm{Sphere}$]{\includegraphics[keepaspectratio, width=0.35\textwidth]{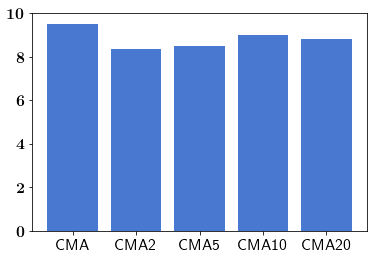}}  
\end{minipage}
	\caption{Comparison of $\mathrm{CMA}$-$\mathrm{ES}$ without $\mathrm{DPPMC}$ vs. with $\mathrm{DPPMC}$ for $\rho = 2,5,10,20$.}
	\label{fig:cmaes_exps} 
\label{ablationrho}
\end{figure}


\end{document}